\declaretheorem[name=Theorem,refname={Theorem,Theorems},Refname={Theorem,Theorems}]{theorem}
\declaretheorem[name=Lemma,refname={Lemma,Lemmas},Refname={Lemma,Lemmas},sibling=theorem]{lemma}
\newcommand{\cD}{\mathcal{D}}
\newcommand{\cE}{\mathcal{E}}
\newcommand{\cG}{\mathcal{G}}
\newcommand{\cL}{\mathcal{L}}
\newcommand{\cT}{\mathcal{T}}
\newcommand{\cX}{\mathcal{X}}
\newcommand{\condprob}[2]{\mathbb{P} \left(#1 \,\middle|\, #2\right)}
\newcommand{\abs}[1]{\left|#1\right|}
\newcommand*\dif{\mathop{}\!\mathrm{d}}
\newcommand{\I}[1]{\mathds{1} \! \left\{#1\right\}}
\newcommand{\set}[1]{\left\{#1\right\}}
\DeclareMathOperator*{\argmax}{arg\,max\,}
\mathchardef\mhyphen="2D
\definecolor{green}{rgb}{0.0, 0.5, 0.0}
\definecolor{black}{rgb}{0.0, 0.0, 0.0}
\renewcommand\paragraph[1]{\vspace{8pt}\noindent\textbf{#1: }}
\begin{document}


\begin{frontmatter}


\paperid{793} 


\title{Pessimistic Off-Policy Optimization for Learning to Rank}


\author[A,B]{\fnms{Matej}~\snm{Cief}}
\author[C]{\fnms{Branislav}~\snm{Kveton}}
\author[B]{\fnms{Michal}~\snm{Kompan}} 

\address[A]{Brno University of Technology}
\address[B]{Kempelen Institute of Intelligent Technologies}
\address[C]{Amazon}


\begin{abstract}
 Off-policy learning is a framework for optimizing policies without deploying them, using data collected by another policy. In recommender systems, this is especially challenging due to the imbalance in logged data: some items are recommended and thus logged more frequently than others. This is further perpetuated when recommending a list of items, as the action space is combinatorial. To address this challenge, we study pessimistic off-policy optimization for learning to rank. The key idea is to compute lower confidence bounds on parameters of click models and then return the list with the highest pessimistic estimate of its value. This approach is computationally efficient, and we analyze it. We study its Bayesian and frequentist variants and overcome the limitation of unknown prior by incorporating empirical Bayes. To show the empirical effectiveness of our approach, we compare it to off-policy optimizers that use inverse propensity scores or neglect uncertainty. Our approach outperforms all baselines and is both robust and general.
\end{abstract}

\end{frontmatter}

\section{Introduction}
\label{sec:introduction}

Off-policy optimization can be used to learn better policies when the deployment and testing of sub-optimal solutions is costly, such as in recommender systems \citep{hong_non-stationary_2021}.
Despite the obvious benefits, off-policy optimization is often impeded by the \emph{feedback loop}, where the currently deployed policy influences future training data \citep{jeunen_pessimistic_2021}. This bias in data is one of the main challenges in off-policy optimization.

Several unbiased approaches exist to learn from biased data. \emph{Inverse propensity scoring (IPS)}, which re-weights observations with importance weights \citep{horvitz_generalization_1951} to estimate a policy value, is popular. This so-called off-policy evaluation is often used in off-policy optimization, finding the policy with the highest estimated value \citep{jeunen_pessimistic_2021}. While IPS is popular in practice~\citep{bottou_counterfactual_2013}, it has variance issues that compound at scale, which may prevent a successful deployment \citep{gilotte_offline_2018}. An important scenario where IPS has a high variance is recommending a ranked list of items. In this case, the action space is combinatorial, as the number of ranked lists, which represent actions, is exponential in the length of the lists.

Therefore, in real-world ranking problems (for example, news, web search, and e-commerce), model-based methods often outperform IPS methods \citep{jeunen_joint_2020}. The model-based methods rely on an explicit model of the reward conditioned on a context-action pair, such as the probability of a user clicking on a recommendation \citep{he_practical_2014}. A prevalent approach to fitting model parameters, \emph{maximum likelihood estimation (MLE)}, is impacted by non-uniform data collection. As an example, consider choosing between two restaurants where the first has an average rating of $5$, estimated from five reviews, and the second has an average rating of $4.8$, estimated from a thousand reviews. Optimizers using MLE would choose the first restaurant, as they consider only the average rating, while the second choice is more robust.

In our work, we account for the uncertainty caused by an unevenly explored action space by applying pessimism to reward models of action-context pairs for learning to rank. The challenge is to design lower confidence bounds that hold jointly for all lists as the number of unique lists grows exponentially with the list length. A naïve application of existing pessimistic methods to each unique list is sample inefficient. Also, user behavior signals are often biased for higher-ranked items and can only be collected on items that users actually see. The main contributions of our paper are:

\begin{itemize}
    \item We propose \emph{lower confidence bounds (LCBs)} on parameters of model-based approaches in learning to rank and derive error bounds for acting on them in off-policy optimization.
    \item We study both Bayesian and frequentist approaches to estimating LCBs, including an empirical estimation of the prior, as it is often unknown in practice.
    \item We conduct extensive experiments that show the superiority of the proposed methods compared to IPS and MLE policies on four real-world learning to rank datasets with a large action space.
\end{itemize}

\section{Related Work}
\label{sec:related work}

\paragraph{Off-Policy Optimization}
One popular approach to learning from bandit feedback is to employ the empirical risk minimization principle with IPS estimators \citep{joachims_unbiased_2017, bottou_counterfactual_2013, swaminathan_counterfactual_2017}.
Another popular choice is model-based methods \citep{dudik_doubly_2014}. 
These approaches learn a reward model for specific context-action pairs, which is then used to derive an optimal policy. Due to model misspecification, model-based methods are typically biased but have more favorable variance properties than IPS \citep{jeunen_pessimistic_2021}. Variance issues of IPS estimators are further perpetuated in learning to rank since the action space is exponentially large.

\paragraph{Counterfactual Learning to Rank}
Learning to rank models are often trained from user behavior data \citep{joachims_optimizing_2002}. However, implicit feedback, such as user clicks, is noisy and affected by various biases \citep{joachims_evaluating_2007}. Many studies have explored how to extract unbiased relevance signals from biased clicks. One approach is modeling how the user examines a list using click models \citep{chapelle_expected_2009, chuklin_click_2015}. While IPS estimators for various click models have been studied in the past \citep{li_offline_2018}, the key assumption was that the value of a list is linear in the contributions of individual items in it. IPS estimators are unbiased, and their variance can be controlled in various ways \citep{joachims_deep_2018, swaminathan_counterfactual_2017, swaminathan_self-normalized_2015}, such as clipping of the propensity weights \citep{ionides_truncated_2008}. Nevertheless, they do not model non-linear reward models.

Model-based methods can model non-linearity. Despite this, prior works on counterfactual learning to rank focused mainly on linear estimators \citep{li_offline_2018, swaminathan_off-policy_2017, joachims_unbiased_2017}. More recently, a non-linear doubly-robust method for the cascade model has been proposed~\citep{kiyohara_doubly_2022}. These methods suffer from biases in unexplored parts of the action space. They can be used for optimization but suffer from an overly optimistic estimation, a phenomenon known as the optimiser's curse \citep{smith_optimizers_2006}. Our proposed method works with both linear and non-linear click models while alleviating the optimiser's curse.

\paragraph{Pessimistic Off-Policy Optimization}
While off-policy methods learn from data collected by a different policy, on-policy methods learn from the data that they collect. In online learning, the policy needs to balance the immediate rewards of actions and their information gain \citep{mcinerney_explore_2018}. A common approach is being optimistic with respect to the rewards of actions, and \emph{upper confidence bounds (UCBs)} have proved to be effective \citep{li_contextual-bandit_2010}.

In the offline setting, new data cannot be collected, and a common approach is to act pessimistically to be robust. Prior works~\citep{jeunen_pessimistic_2021,hong_multi-task_2023} derived pessimistic Bayesian LCBs for reward models and showed that this can be lead to major gains. Principled Bayesian methods can be used to obtain closed-form expressions, but this requires knowing the prior and is restricted to specific model classes~\citep{jeunen_pessimistic_2021, chapelle_expected_2009, li_contextual-bandit_2010}.

This is the first work that applies pessimism to learning reward functions for ranking. While pessimism has been popular in offline reinforcement learning \citep{xie_bellman-consistent_2021}, it was applied only to unstructured action spaces before \citep{jeunen_pessimistic_2021}. We extend this work from pointwise to listwise pessimism and study multiple approaches for constructing pessimistic estimates.

\section{Setting}
\label{sec:setting}

We start by introducing our setting. Specifically, we formally define a ranked list, how a user interacts with it, and how the data for off-policy optimization are collected.

We consider the following general model of a user interacting with a ranked list of items. Let $\cE$ be a \emph{ground set of items}, such as all web pages or movies that can be recommended. Let $\Pi_K(\cE)$ be the set of all lists of length $K$ over items $\cE$. A user is recommended a ranked list of items. We denote a \emph{ranked list} with $K$ items by $A = (a_1, \dots, a_K) \in \Pi_K(\cE)$, where $a_k \in \cE$ is the item at position $k$. The user clicks on items in the list, and we observe click indicators on all positions $Y = (Y_1, \dots, Y_K)$, where $Y_k \in \{0, 1\}$ is the \emph{click indicator} on position $k$. The list is chosen as a function of \emph{context} $X \in \cX$, where $X$ can be a user profile or a search query coming from a set of contexts $\cX$.

A ranking \emph{policy} $\pi(\cdot \mid X)$ is a conditional probability distribution over lists given context $X$. It interacts with users for $n$ rounds indexed by $t \in [n]$. In round $t$, it selects a list $A_t \sim \pi(\cdot \mid X_t)$, where $A_t = (a_{t, 1}, \dots, a_{t, K}) \in \Pi_K(\cE)$ and $X_t$ is the context in round $t$. After that, it observes clicks $Y_t = (Y_{t, 1}, \dots, Y_{t, K})$ on all recommended items in the list. All interactions are recorded in a \emph{logged dataset} $\cD = \{(X_t, A_t, Y_t)\}_{t = 1}^n$. The policy that collects $\cD$ is called the \emph{logging policy}, and we denote it by $\pi_0$.

Our goal is to find a policy that recommends the \emph{optimal list} in every context. The optimal list in context $X$ is defined as
\begin{align}
  \textstyle
  A_{*, X}
  = \argmax_{A \in \Pi_K(\cE)} V(A, X)\,,
  \label{eq:optimal list}
\end{align}
where $V(A, X)$ is the value of list $A$ in context $X$. This can be the expected number of clicks or the probability of observing a click. The definition of $V$ depends on the chosen user interaction model. We present several of them in \cref{sec:structured pessimism}.

\section{Pessimistic Optimization}
\label{sec:pessimistic optimization}

Suppose that we want to find list $A_{*, X}$ in \eqref{eq:optimal list} but $V(A, X)$ is unknown. Then, the most straightforward approach is to estimate it and choose the best list according to the estimate. As an example, let $\hat{V}(A, X)$ be the \emph{maximum likelihood estimate (MLE)} of $V(A, X)$. Then the best empirically-estimated list in context $X$ is
\begin{align}
  \textstyle
  \hat{A}_X
  = \argmax_{A \in \Pi_K(\cE)} \hat{V}(A, X)\,.
  \label{eq:mle list}
\end{align}
This solution is problematic when $\hat{V}$ estimates $V$ poorly. The reason is that we may choose a list with a high estimated value $\hat{V}(\hat{A}_X, X)$ but low actual value $V(\hat{A}_X, X)$.

To account for uncertainty, prior works in bandits and reinforcement learning designed pessimistic \emph{lower confidence bounds (LCB)} and acted on them \citep{jin_is_2021}. We adopt the same design principle in our proposed method (\cref{alg:off-policy optimization}). At a high level, the algorithm first computes LCBs $L(X, A)$ for all action-context pairs $(A, X)$. The lower confidence bound satisfies $L(A, X) \leq V(A, X)$ with a high probability. Then it takes an action $\hat{A}_X$ with the highest lower confidence bound $L(\cdot, X)$ in each context $X$. In \cref{sec:structured pessimism,sec:lcbs}, we show how to design LCBs for entire lists of items efficiently. These LCBs, and our subsequent analysis in \cref{sec:analysis}, are our main technical contributions.

\begin{algorithm}[t]
  \caption{Conservative off-policy optimization.}
  \label{alg:off-policy optimization}
  \begin{algorithmic}
    \State \textbf{Inputs:} Logged dataset $\cD$
    \For{$X \in \cX$}
      \State $\hat{A}_X \gets \argmax_{A \in \Pi_K(\cE)} L(A, X)$
    \EndFor
    \State \textbf{Output:} $\hat{A} = (\hat{A}_X)_{X \in \cX}$
  \end{algorithmic}
\end{algorithm}

Lower confidence bounds are beneficial when $\hat{V}$ does not approximate $V$ uniformly well. Specifically, suppose that $\hat{V}$ approximates $V$ better around optimal solutions $A_{*, X}$. This is common in practice because deployed logging policies $\pi_0$ are already optimized to select high-value items. Then, low-value items can only be chosen if the LCBs of high-value items are low. This cannot happen because the high-value items are logged frequently; and thus their estimated mean values are high and their confidence intervals are tight.

As a concrete example, consider two lists of recommended items. The first list contains items with an estimated click-through rate (CTR) of $1$, but all of them were recommended only once. The other list contains items with an estimated CTR of $0.5$, but those items are popular and were recommended a thousand times. Off-policy optimization with the MLE estimator would choose the first list, whose estimated value is high, but the actual value may be low. Off-policy optimization with LCBs would choose the other list since its estimated value is reasonably high but more certain.

\section{Structured Pessimism}
\label{sec:structured pessimism}

In this section, we construct lower confidence bounds for lists. The main challenge is how to establish useful LCBs for all lists jointly since there can be exponentially many lists. To do that, we rely on user-interaction models with ranked lists, the so-called click models~\citep{chuklin_click_2015}. The models allow us to construct LCBs for the whole list by decomposing them into LCBs of items in it.

To illustrate the generality of our approach, we study three popular click models. To simplify notation, we assume that the context $X$ is fixed in this section and drop it from all terms. In each click model, the relevance of item $a \in \cE$ is represented by its attraction probability $\theta_a \in [0, 1]$. This is the probability that the item is clicked, given that it is examined. In each model, we show that when we have  LCBs for each $\theta_a$, we have  LCBs for all lists $A$, trivially by the union bound.

\subsection{Cascade Model}
\label{sec:cascade model}

The \emph{cascade model (CM)} \citep{richardson_predicting_2007,craswell_experimental_2008} assumes that a user examines items in a list from top to bottom until they find a relevant item and click on it \citep{chuklin_click_2015}. Item $a_k$ at position $k$ is examined if and only if item $a_{k-1}$ is examined but not clicked. The first position is always examined.

Since at most one item is clicked, a natural choice for the value of list $A$ is the probability of a click defined as 
\begin{align}
  V_\textsc{cm}(A)
  = 1 - \prod_{k=1}^K (1 - \theta_{a_k})\,,
  \label{eq:value cascade model}
\end{align}
where $\theta_a \in [0, 1]$ denotes the attraction probability of item $a \in \cE$. To stress that the above value is for a specific model, the CM, in this case, we write $V_{\textsc{cm}}$. The optimal list $A_*$ contains $K$ items with the highest attraction probabilities \citep{kveton_cascading_2015}.

To establish LCBs for all lists, we need LCBs for all model parameters. In the CM, the value of a list depends only on the attraction probabilities of its items. Let $L(a)$ be the LCB on the attraction probability of item $a$, where $\theta_a \geq L(a)$ holds with probability at least $1 - \delta$. Then, for all lists $A$ jointly, the LCB
\begin{align*}
  L_\textsc{cm}(A)
  = 1 - \prod_{k = 1}^K (1 - L(a_k))
  \leq 1 - \prod_{k = 1}^K (1 - \theta_{a_k})
\end{align*}
holds with probability at least $1 - \delta \abs{\cE}$, by the union bound over all items. The above inequality holds because we have a lower bound on each term in the product.

\subsection{Dependent-Click Model}
\label{sec:dependent-click model}

The \emph{dependent-click model (DCM)} \citep{guo_efficient_2009} extends the CM to multiple clicks. This model assumes that after a user clicks on an item, they may continue examining items at lower positions in the list. Specifically, the probability that the user continues to explore after a click at position $k \in [K]$ is $\lambda_k \in [0, 1]$.

A natural choice for the value of list $A$ in the DCM is the probability of a satisfactory click, a click upon which the user leaves satisfied. This can be formally written as
\begin{align}
  V_\textsc{dcm}(A)
  = 1 - \prod_{k = 1}^K (1 - (1 - \lambda_k) \theta_{a_k})\,,
  \label{eq:value dependent-click model}
\end{align}
where $\theta_a \in [0, 1]$ denotes the attraction probability of item $a \in \cE$, identically to \cref{sec:cascade model}. The optimal list $A_*$ contains $K$ items with the highest attraction probabilities, where the $k$-th most attractive item is placed at the $k$-th most satisfactory position \citep{katariya_dcm_2016}. Let $L(a)$ be defined as in \cref{sec:cascade model}. Then, for all lists $A$ jointly, the LCB
\begin{align*}
  L_\textsc{dcm}(A)
  & = 1 - \prod_{k = 1}^K (1 - (1 - \lambda_k) L(a_k)) \\
  & \leq 1 - \prod_{k = 1}^K (1 - (1 - \lambda_k) \theta_{a_k})
\end{align*}
holds with probability at least $1 - \delta \abs{\cE}$, by the union bound over all items. To simplify exposition, we assume that the position parameters $\lambda_k$ are known.

\subsection{Position-Based Model}
\label{sec:position-based model}

The \emph{position-based model (PBM)} \citep{craswell_experimental_2008} assumes that the click probability depends on the item and its position. The effect of the position is modeled through the examination probability $p_k \in [0, 1]$ of position $k \in [K]$. Specifically, the item is clicked only if its position is examined and the item is attractive.

Since the PBM allows multiple clicks, a natural choice for the value of list $A$ is the expected number of clicks
\begin{align}
  V_\textsc{pbm}(A)
  = \sum_{k = 1}^K p_k \theta_{a_k}\,,
  \label{eq:value position-based model}
\end{align}
where $\theta_a \in [0, 1]$ denotes the attraction probability of item $a \in \cE$, identically to \cref{sec:cascade model}. The optimal list $A_*$ contains $K$ items with the highest attraction probabilities, where the $k$-th most attractive item is placed at the position with the $k$-th highest $p_k$. Let $L(a)$ be defined as in \cref{sec:cascade model}. Then, for all lists $A$ jointly, the LCB
\begin{align*}
  L_\textsc{pbm}(A)
  = \sum_{k = 1}^K p_k L(a_k)
  \leq \sum_{k = 1}^K p_k \theta_{a_k}
\end{align*}
holds with probability at least $1 - \delta \abs{\cE}$, by the union bound over all items. We assume that the position examination probabilities $p_k$ are known, similarly to the DCM (\cref{sec:dependent-click model}).

\section{Lower Confidence Bounds on Attraction Probabilities}
\label{sec:lcbs}

In this section, we construct LCBs for attraction probabilities $\theta_a$ of individual items in \cref{sec:structured pessimism}. Note that they are means of Bernoulli random variables, which we use in our derivations. We consider two types of LCBs: Bayesian and frequentist. The Bayesian bounds assume that the attraction probabilities are drawn i.i.d.\ from a known prior distribution. The frequentist bounds make no assumption on the distribution of the attraction probabilities. The Bayesian bounds are more practical when the prior is available, while the frequentist bounds are more robust due to fewer modeling assumptions. Our bounds are derived independently for each context $X$. To simplify notation, we drop it in the derivations in this section.

\subsection{Bayesian Lower Confidence Bounds}
\label{sec:bayesian lcbs}

Let $\theta_a \in [0, 1]$ be the mean of a Bernoulli random variable representing the attraction probability of item $a$. Let $Y_1, \dots, Y_n$ be $n$ i.i.d. observations of $\theta_a$. Let the number of positive and negative observations be $n_{a, +}$ and $n_{a, -}$, respectively. We estimate $n_{a, +}$ and $n_{a, -}$ for each click model in \cref{sec:click models}. In the Bayesian setting, we make an assumption that $\theta_a \sim \mathrm{Beta}(\alpha, \beta)$. Thus $\theta_a \mid Y_1, \dots, Y_n \sim \mathrm{Beta}(\alpha + n_{a, +}, \beta + n_{a, -})$ \citep{bishop_pattern_2006} and a lower confidence bound on $\theta_a$ that holds with probability at least $1 - \delta$ is $L(a) = $
\begin{align}
  \max \set{\ell \in [0, 1]: \smashoperator{\int_{z = 0}^\ell} \mathrm{Beta}(z; \alpha + n_{a, +}, \beta + n_{a, -}) \dif z \leq \frac{\delta}{2}}\,.
  \label{eq:bayes quantile}
\end{align}
According to \eqref{eq:bayes quantile}, $L(a)$ is the largest value such that the probability of $\theta_a \leq L(a)$ is at most $\frac{\delta}{2}$ \citep{bishop_pattern_2006}. We note that $L(a)$ is a quantile of a probability density.

\subsection{Frequentist Lower Confidence Bounds}
\label{sec:frequentist lcbs}

If the prior of $\theta_a$ is unknown or poorly estimated, Bayesian estimates could be biased. In this case, Hoeffding's inequality \citep{vershynin_high-dimensional_2018} would be preferred, as it provides a confidence interval for any random variable on $[0, 1]$. Specifically, let $\theta_a$ be any value in $[0, 1]$, and all other quantities be defined as in the Bayesian estimator. Then a $1 - \delta$ lower confidence bound on $\theta_a$ is
\begin{align}
  L(a)
  = n_{a, +} / n_a - \sqrt{\log(1 / \delta) / (2n_a)}\,,
  \label{eq:hoeffding inequality}
\end{align}
where $n_{a, +} / n_a$ is the MLE, $n_a = n_{a, +} + n_{a, -}$, and event $\theta_a \leq L(a)$ occurs with probability at most $\delta$ \citep{vershynin_high-dimensional_2018}.

\subsection{Prior Estimation}
\label{sec: prior estimation}

One shortcoming of Bayesian methods is that the prior is often unknown. To address this, we show how to estimate it using empirical Bayes \citep{maritz_empirical_2017}, which can be implemented for attraction probabilities as follows. For any $a \in \cE$, let $\theta_a \sim \mathrm{Beta}(\alpha, \beta)$ be the mean of a Bernoulli random variable, which is drawn i.i.d.\ from the unknown prior $\mathrm{Beta}(\alpha, \beta)$. Let $N_{a, +}$ and $N_{a, -}$ be the random variables that denote the number of positive and negative observations, respectively, of $\theta_a$. Let $n_{a, +}$ and $n_{a, -}$ be their observed values, and $n_a = n_{a, +} + n_{a, -}$. Then, the likelihood of the observations for any fixed $\alpha$ and $\beta$ is
\begin{align}
  & \cL(\alpha, \beta)
  = \prod_{a \in \cE} \condprob{N_{a, +} = n_{a, +}, \, N_{a, +} = n_{a, +}}{\alpha, \beta}
  \nonumber \\
  & = \prod_{a \in \cE} \frac{\Gamma(\alpha + \beta)}{\Gamma(\alpha) \Gamma(\beta)}
  \int_{z = 0}^1 z^{\alpha + n_{a, +} - 1}
  (1 - z)^{\beta + n_{a, -} - 1} \dif z
  \label{eq: prior estimation} \\
  & = \prod_{a \in \cE}
  \frac{\Gamma(\alpha + \beta) \Gamma(\alpha + n_{a, +}) \Gamma(\beta + n_{a, -})}
  {\Gamma(\alpha) \Gamma(\beta) \Gamma(\alpha + \beta + n_a)}\,.
  \nonumber
\end{align}
The last equality follows from the fact that
\begin{align*}
  \int_{z = 0}^1
  \frac{\Gamma(\alpha + \beta + n_a) z^{\alpha + n_{a, +} - 1} (1 - z)^{\beta + n_{a, -} - 1}}{\Gamma(\alpha + n_{a, +}) \Gamma(\beta + n_{a, -})} \dif z
  = 1\,.
\end{align*}
Empirical Bayes \citep{maritz_empirical_2017} is a statistical procedure for finding $(\alpha, \beta)$ that maximize $\cL(\alpha, \beta)$. To find them, we search on a grid. Specifically, let $\cG = [m]$ for some integer $m > 0$. Then we search over all $(\alpha, \beta) \in \cG^2$. The grid search is feasible since the parameter space has only $2$ dimensions.

\section{Analysis}
\label{sec:analysis}

The analysis is structured as follows. First, we derive confidence intervals for items and lists. Then, we show how the error of acting with respect to LCBs is bounded. Finally, we discuss how different choices of $\pi_0$ affect the error in \cref{lemma:optimization lcb}. All proofs are presented in \cref{appendix: pessimistic optimization}. We conduct a frequentist analysis based on the confidence intervals in \cref{sec:frequentist lcbs}. A similar analysis could be done for the Bayesian setting (\cref{sec:bayesian lcbs}). The analysis is exact for the CM and DCM. For the PBM, it is approximate, and we provide details in \cref{sec:click models}.

Let $\theta_{a, X} \in [0, 1]$ be the attraction probability of item $a \in \cE$ in context $X \in \cX$. Let $\hat{\theta}_{a, X} = n_{a, X, +} / n_{a, X}$ be its empirical estimate in \eqref{eq:hoeffding inequality}, where $n_{a, X, +}$ is the number of clicks on item $a$ in context $X$ and $n_{a, X}$ is the number of times that item $a$ is observed in context $X$. Then, we get the following concentration bound for all items.

\begin{lemma}[Concentration of item estimates]
\label{lemma: concentration item} Let
\begin{align*}
  c(a, X)
  = \sqrt{\log(1 / \delta ) / (2 n_{a, X})}\,.
\end{align*}
Then for any item $a \in \cE$ and context $X \in \cX$, $\abs{\hat{\theta}_{a, X} - \theta_{a, X}} \leq c(a, X)$ holds with probability at least $1 - \delta$.
\end{lemma}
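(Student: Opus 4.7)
The plan is to prove this as a direct application of Hoeffding's inequality to the i.i.d.\ Bernoulli observations that define $\hat{\theta}_{a, X}$. Since the excerpt already describes $Y_1, \dots, Y_n$ as i.i.d.\ Bernoulli draws with mean $\theta_a$ and defines $\hat{\theta}_{a, X} = n_{a, X, +} / n_{a, X}$ as their empirical average (equivalently, restricted to the $n_{a, X}$ observations in the logged data where item $a$ is shown in context $X$), the argument reduces to a standard tail bound for sums of bounded random variables.

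Concretely, I would first fix $a \in \cE$ and $X \in \cX$ and condition on $n_{a, X}$, so that $\hat{\theta}_{a, X}$ is the mean of $n_{a, X}$ i.i.d.\ Bernoulli$(\theta_{a, X})$ variables, each supported on $[0, 1]$. Then I would invoke Hoeffding's inequality, which yields
\begin{align*}
  \prob{\abs{\hat{\theta}_{a, X} - \theta_{a, X}} \geq \eps}
  \leq 2 \exp(- 2 n_{a, X} \eps^2)
\end{align*}
for every $\eps > 0$. Substituting $\eps = c(a, X) = \sqrt{\log(1/\delta) / (2 n_{a, X})}$ gives a failure probability bounded by the desired $\delta$ (up to a harmless factor of $2$ that can be absorbed by a mild redefinition of $\delta$, as is standard in this line of work, or by using the one-sided Hoeffding bound since \cref{sec:frequentist lcbs} only needs the lower-tail event $\theta_{a, X} \geq \hat{\theta}_{a, X} - c(a, X)$).

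There is essentially no hard step here: the whole content is that $\hat{\theta}_{a, X}$ is a sample mean of independent $[0, 1]$-valued random variables with true mean $\theta_{a, X}$, which is immediate from the construction of the logged dataset $\cD$ in \cref{sec:setting}. The only mild subtlety worth flagging in the write-up is the independence of observations across rounds in which item $a$ is recommended in context $X$: because the logging policy $\pi_0$ may select $a$ adaptively, strictly speaking one should either condition on the sequence of rounds in which $(a, X)$ is observed or invoke a martingale version of Hoeffding (Azuma--Hoeffding) applied to the clicks on $a$. Either route produces the same bound with the same $c(a, X)$, so the conclusion of the lemma follows with probability at least $1 - \delta$, as claimed.
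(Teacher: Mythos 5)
Your proposal is correct and matches the paper's (implicit) argument: the paper gives no separate proof of this lemma and simply treats it as a direct application of Hoeffding's inequality for bounded i.i.d.\ observations, exactly as you do. Your remarks about the factor of $2$ in the two-sided bound and about conditioning on $n_{a,X}$ (or using Azuma--Hoeffding under adaptive logging) are more careful than the paper itself, which states the two-sided inequality with $\log(1/\delta)$ and does not address either point.
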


Let $V(A, X)$ be the value of list $A \in \Pi_K(\cE)$ in context $X \in \cX$, using the unknown attraction probabilities $\theta_{a, X}$. Let $\hat{V}(A, X)$ be its estimate using $\hat{\theta}_{a, X}$, for any click model in \cref{sec:structured pessimism}. Then, we get the following concentration bound for all lists.

\begin{lemma}[Concentration of list estimates]
\label{lemma: concentration list} Let
\begin{align*}
  c(A, X)
  = \sum_{a \in A}
  \sqrt{\frac{\log(\abs{\cE} \abs{\cX} / \delta)}{2 n_{a, X}}}\,.
\end{align*}
Then for any list $A \in \Pi_K(\cE)$ and context $X \in \cX$, and any click model in \cref{sec:structured pessimism}, $\abs{\hat{V}(A, X) - V(A, X)} \leq c(A, X)$ holds with probability at least $1 - \delta$, jointly over all $A$ and $X$.
\end{lemma}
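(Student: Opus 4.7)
The plan is to start from Lemma~\ref{lemma: concentration item} applied at a rescaled confidence level and then propagate the per-item error through the closed-form expressions of $V$ in each click model. The per-item bound gives, for a single pair $(a, X)$, that $|\hat\theta_{a,X} - \theta_{a,X}| \leq \sqrt{\log(1/\delta')/(2 n_{a,X})}$ with probability at least $1-\delta'$. Setting $\delta' = \delta/(|\mathcal{E}|\,|\mathcal{X}|)$ and taking a union bound over all $(a, X) \in \mathcal{E} \times \mathcal{X}$ yields the joint event $\mathcal{G}$ on which
\[
  |\hat\theta_{a,X} - \theta_{a,X}| \;\leq\; \sqrt{\log(|\mathcal{E}|\,|\mathcal{X}|/\delta)/(2 n_{a,X})}
\]
holds simultaneously for all items and all contexts, and this occurs with probability at least $1 - \delta$. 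Note that the bound on a whole list $A$ in $c(A, X)$ is just the sum of these per-item widths over $a \in A$, so on $\mathcal{G}$ each summand in $c(A, X)$ already upper-bounds the corresponding item-level deviation.

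Next, conditional on $\mathcal{G}$, I would reduce the list-value deviation to a sum of item-level deviations, one click model at a time. For PBM the reduction is immediate:
\[
  |\hat V_{\textsc{pbm}}(A) - V_{\textsc{pbm}}(A)|
  = \Big|\sum_{k=1}^K p_k (\hat\theta_{a_k} - \theta_{a_k})\Big|
  \leq \sum_{k=1}^K |\hat\theta_{a_k} - \theta_{a_k}|,
\]
using $p_k \in [0,1]$ and the triangle inequality. For CM and DCM the value has the form $1 - \prod_{k=1}^K u_k$ where $u_k \in [0,1]$ is an affine (in fact contractive) function of $\theta_{a_k}$: $u_k = 1 - \theta_{a_k}$ for CM and $u_k = 1 - (1-\lambda_k)\theta_{a_k}$ for DCM. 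I would then invoke the elementary telescoping inequality
\[
  \Big|\prod_{k=1}^K u_k - \prod_{k=1}^K \hat u_k\Big|
  \leq \sum_{k=1}^K |u_k - \hat u_k|,
\]
valid whenever $u_k, \hat u_k \in [0,1]$, which is proved by inserting hybrids $\prod_{j<k} \hat u_j \prod_{j \geq k} u_j$, using $|ab - cb| = |a-c||b|$ and bounding the remaining factor by $1$. Since in both CM and DCM $|u_k - \hat u_k| \leq |\theta_{a_k} - \hat\theta_{a_k}|$ (the DCM case contracts by $1-\lambda_k \leq 1$), we again obtain $|\hat V(A) - V(A)| \leq \sum_{a \in A}|\hat\theta_a - \theta_a|$.

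Combining the two steps on the event $\mathcal{G}$, for every list $A \in \Pi_K(\mathcal{E})$ and context $X \in \mathcal{X}$ and for any of the three click models,
\[
  |\hat V(A, X) - V(A, X)|
  \leq \sum_{a \in A} |\hat\theta_{a,X} - \theta_{a,X}|
  \leq \sum_{a \in A} \sqrt{\log(|\mathcal{E}|\,|\mathcal{X}|/\delta)/(2 n_{a,X})}
  = c(A, X),
\]
and this holds jointly over all $(A, X)$ with probability at least $1-\delta$, as desired. The only mildly non-routine ingredient is the telescoping inequality for products in $[0,1]$, and even that is standard; the rest is bookkeeping for the union bound and a case split across the three models. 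I would expect the main subtlety in the write-up to be stating clearly that the $\lambda_k$ (for DCM) and $p_k$ (for PBM) are treated as known constants, so that no additional confidence width is incurred on their behalf—this is already flagged in Sections~\ref{sec:dependent-click model} and~\ref{sec:position-based model}.
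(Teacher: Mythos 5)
Your proposal is correct and follows essentially the same route as the paper's proof: a union bound over items and contexts at level $\delta/(\abs{\cE}\abs{\cX})$, a direct triangle-inequality reduction for PBM, and the telescoping inequality $\abs{\prod_k u_k - \prod_k \hat u_k} \leq \sum_k \abs{u_k - \hat u_k}$ for factors in $[0,1]$ (the paper's \cref{lemma: products bounded by sum}, proved by the same hybrid/recursive decomposition) to handle CM and DCM, with the $1 - \lambda_k \leq 1$ contraction absorbing the DCM position parameters. No gaps.
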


Now we show how the error due to acting pessimistically does not depend on the uncertainty of the chosen list but on the confidence interval width of optimal list $c(A_{*,X}, X)$. This is desirable when the logging policy already performs well (\cref{sec:pessimistic optimization}).

\begin{theorem}[Error of acting pessimistically]
\label{lemma:optimization lcb} Let $A_{*, X}$ and $\hat{A}_X$ be defined as in \eqref{eq:optimal list} and \cref{alg:off-policy optimization}, respectively. Let $L(A, X) = \hat{V}(A, X) - c(A, X)$ be a high-probability lower bound on the value of list $A$ in context $X$, where all quantities are defined as in \cref{lemma: concentration list}. Then, for any context $X$, the error of acting with respect to a lower confidence bound satisfies
\begin{align*}
  V(A_{*, X}, X) - V(\hat{A}_X, X)
  & \leq 2 c(A_{*, X}, X) \\ 
  & \leq 2 \sum_{a \in A_{*, X}}
  \sqrt{\frac{\log(\abs{\cE} \abs{\cX} / \delta)}{2 n_{a, X}}}
\end{align*}
with probability at least $1 - \delta$, jointly over all $X$.
\end{theorem}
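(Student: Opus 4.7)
The plan is to follow the standard pessimism template: condition on the good event from \cref{lemma: concentration list}, use the fact that the true value always lies in the confidence band around $\hat{V}$, and exploit the optimality of $\hat{A}_X$ with respect to the LCB to transfer the bound from the (unknown, possibly poorly explored) chosen list onto the optimal list $A_{*,X}$.

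Concretely, I would begin by invoking \cref{lemma: concentration list} to assert that with probability at least $1-\delta$, the event
\begin{align*}
  \cG = \set{\abs{\hat{V}(A, X) - V(A, X)} \leq c(A, X) \text{ for all } A \in \Pi_K(\cE), \, X \in \cX}
\end{align*}
holds. All subsequent reasoning is conditioned on $\cG$. From the two-sided bound $|\hat{V}-V|\leq c$, I would first extract the two consequences I actually need: (i) $L(A,X) = \hat{V}(A,X) - c(A,X) \leq V(A,X)$ for every $(A,X)$, so the LCB is a genuine lower bound on the value, and (ii) $V(A,X) \leq \hat{V}(A,X) + c(A,X) = L(A,X) + 2c(A,X)$, so the value never exceeds the LCB by more than twice the width.

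Next I would chain a three-step inequality for an arbitrary fixed context $X$:
\begin{align*}
  V(A_{*, X}, X)
  & \leq L(A_{*, X}, X) + 2 c(A_{*, X}, X) \\
  & \leq L(\hat{A}_X, X) + 2 c(A_{*, X}, X) \\
  & \leq V(\hat{A}_X, X) + 2 c(A_{*, X}, X),
\end{align*}
where the first step uses consequence (ii) at $A=A_{*,X}$, the second uses the fact that $\hat{A}_X$ maximizes $L(\cdot, X)$ (so $L(\hat{A}_X,X) \geq L(A_{*,X},X)$) as defined in \cref{alg:off-policy optimization}, and the third uses consequence (i) at $A=\hat{A}_X$. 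Rearranging and substituting the explicit form of $c(A_{*,X},X) = \sum_{a \in A_{*,X}} \sqrt{\log(|\cE||\cX|/\delta)/(2 n_{a,X})}$ from \cref{lemma: concentration list} gives the claimed bound. Since $\cG$ is a joint event over all $X$, the conclusion also holds jointly over all contexts with the same probability.

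The key structural point, which is worth emphasizing in the write-up rather than being a genuine obstacle, is the asymmetric role played by the two uses of concentration: widening from $V(A_{*,X},X)$ to $L(A_{*,X},X)$ pays $c(A_{*,X},X)$, while tightening from $L(\hat{A}_X,X)$ back to $V(\hat{A}_X,X)$ pays nothing beyond the already-absorbed $c$. This is precisely why the final error depends on $c(A_{*,X},X)$ rather than the a priori uncertain $c(\hat{A}_X,X)$, matching the motivation given in the paragraph preceding the theorem. There is no hard technical step here; the only care needed is ensuring that the concentration event is declared jointly over all $(A,X)$ before applying it twice — once at $A_{*,X}$ and once at $\hat{A}_X$ — which is exactly what the union bound in \cref{lemma: concentration list} provides.
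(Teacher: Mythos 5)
Your proposal is correct and follows essentially the same argument as the paper's proof: condition on the joint concentration event from \cref{lemma: concentration list}, use the optimality of $\hat{A}_X$ with respect to $L(\cdot,X)$, apply $L(\hat{A}_X,X)\leq V(\hat{A}_X,X)$, and pay $2c(A_{*,X},X)$ only on the optimal list. The only difference is presentational — you write the bound as a forward chain from $V(A_{*,X},X)$ while the paper telescopes $V(A_{*,X},X)-V(\hat{A}_X,X)$ through $L(A_{*,X},X)$ — which is an algebraically identical derivation.
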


\cref{lemma:optimization lcb} shows that our error depends on the number of observations of items in the optimal list $a \in A_{*,X}$. To illustrate how it depends on the logging policy $\pi_0$, fix context $X \in \cX$ and let $n_X$ be the number of logged lists in context $X$. We discuss two cases.

Suppose that $\pi_0$ is uniform, and thus each item $a \in \cE$ is placed at the first position with probability $1 / \abs{\cE}$. Moreover, suppose that the first position is examined with a high probability. This happens with probability $1$ in the CM (\cref{sec:cascade model}) and DCM (\cref{sec:dependent-click model}), and with a high probability in the PBM (\cref{sec:position-based model}) when $p_1$ is high. Then $n_{a, X} = \tilde{\Omega}(n_X / \abs{\cE})$ as $n_X \to \infty$ and the error bound in \cref{lemma:optimization lcb} becomes $\tilde{O}(K \sqrt{\abs{\cE} / n_X})$, where $\tilde{O}$ and $\tilde{\Omega}$ are big O notations up to logarithmic factors. The bound is independent of the number of lists $\abs{\Pi_K(\cE)}$, which is exponentially large.

Now, suppose that the logging policy is near optimal. One way of formalizing it is that each item $a \in A_{*, X}$ is placed at the first position with probability $1 / K$. Then, under the same assumptions as in the earlier discussion, $n_{a, X} = \tilde{\Omega}(n_X / K)$ as $n_X \to \infty$ and the error bound in \cref{lemma:optimization lcb} becomes $\tilde{O}(K \sqrt{K / n_X})$. This improves by a factor of $\abs{\cE} / K$ upon the earlier discussed bound.

\section{Experiments}

\begin{figure*}[t!]
    \centering
    \begin{subfigure}[t]{0.8\linewidth}
     \includegraphics[width=\textwidth]{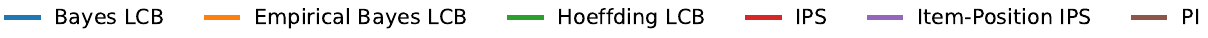}     
    \end{subfigure}
    \begin{subfigure}[t]{\linewidth}
     \includegraphics[width=\textwidth]{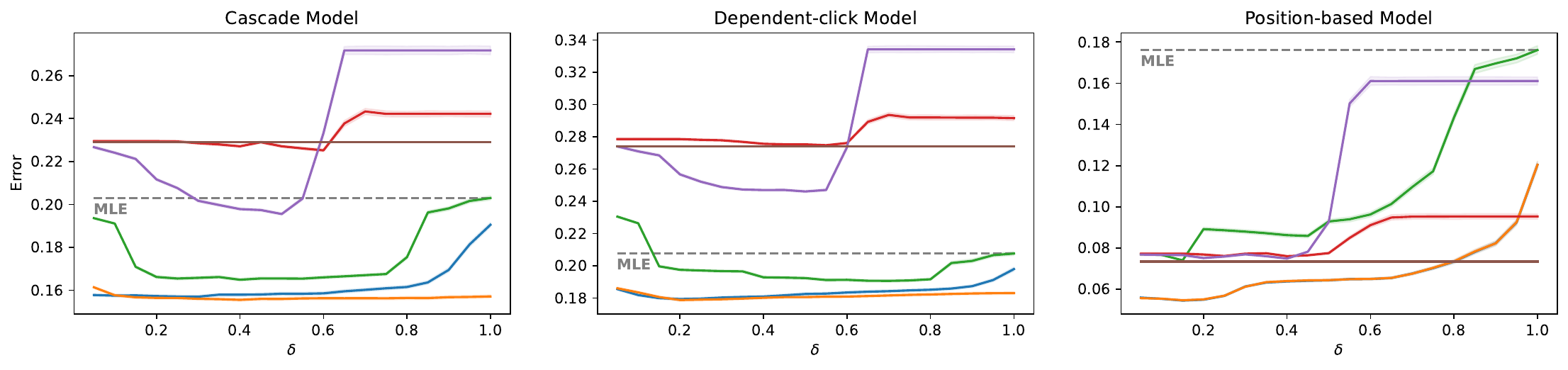}     
    \end{subfigure}
    \caption{Comparison of our methods to baselines on three click models and top 10 queries. We vary the parameter $\delta$ that sets the confidence interval width. MLE is the grey dashed line.}
    \label{fig:02}
\end{figure*}
\begin{figure*}[t!]
\centering
\begin{minipage}[t]{0.8\linewidth}
     \includegraphics[width=\textwidth]{figs/legend.pdf}     
    \end{minipage}
\begin{minipage}[t]{0.34\textwidth}
    \centering
    \includegraphics[width=\textwidth]{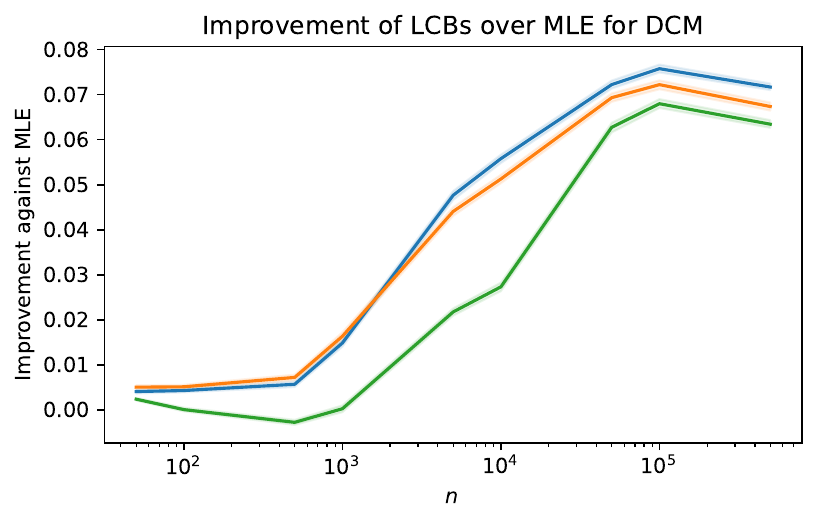}
    \caption{Comparison of our methods to MLE when increasing the sample size $n$.}
    \label{fig:04}
\end{minipage}
\hfill
\begin{minipage}[t]{0.64\textwidth}
    \centering
    \includegraphics[width=\textwidth]{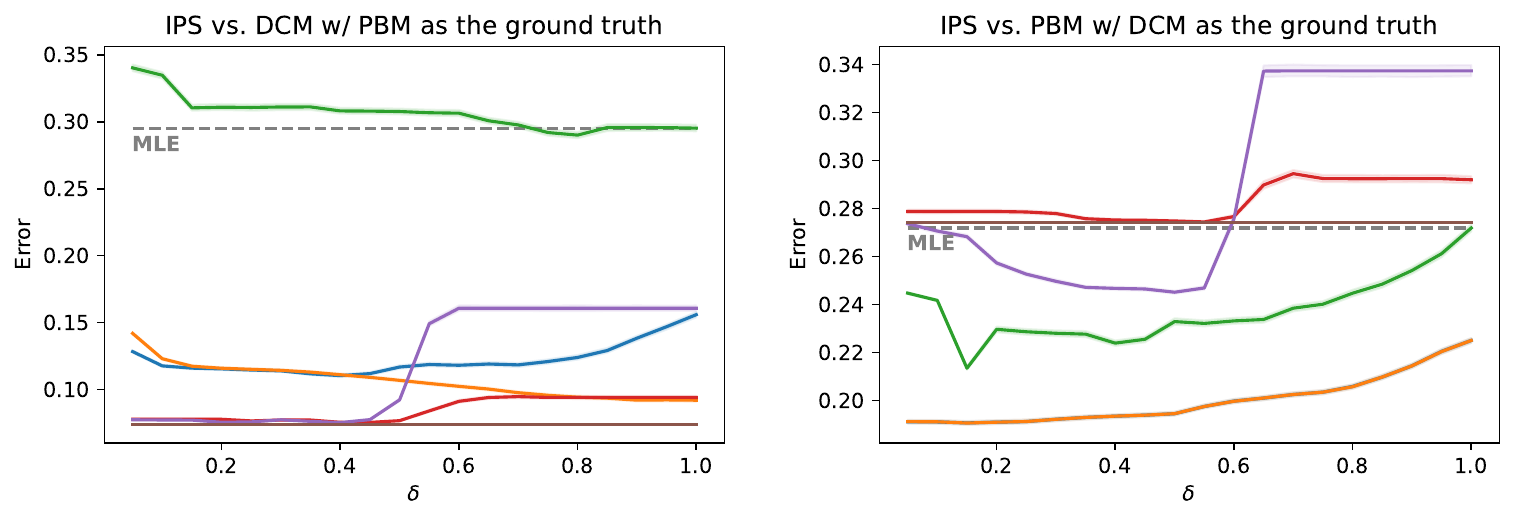}
    \caption{Robustness evaluation of our methods and baselines.}
    \label{fig:07}
\end{minipage}
\vspace{12pt}
\end{figure*}

We conduct extensive experiments where we compare our policies to four baselines: MLE, IPS \citep{robins_estimation_1994}, structured item-position IPS \citep{li_offline_2018}, and pseudo-inverse estimator \citep{swaminathan_off-policy_2017}. We call our method LCB because it optimizes lower confidence bounds.

\subsection{Experimental Setup}
\label{sec: experimental setup}

We use the \emph{Yandex} dataset \citep{yandex_yandex_2013} for the first three experiments. We treat each query as a different context $X$, perform the computations separately, and then average the results. Due to a huge position bias in the dataset, most clicks occur at the first positions; we only keep the first $4$ positions in each list and discard the rest, as done in prior works \citep{li_offline_2018}. We observe improvements without this preprocessing step but they are less pronounced.

All methods are evaluated as follows. We first fit a click model (\cref{sec:structured pessimism}) to a dataset. Because of that, we know the optimal list $A_{*, X}$ in each context $X$ under that model. We select a specific set of queries from the dataset for each experiment. Then, for each query, we select uniformly at random items from that query and generate clicks based on the fitted click model. We repeat this $n$ times and obtain a logged dataset $\cD = \set{(X_t, A_t, Y_t)}_{t = 1}^{n}$, where $X_t$ and $A_t$ are from the original dataset, and $Y_t$ is generated by the click model. Unless stated, $n = 1000$. We apply off-policy methods to $\cD$ to find the most valuable lists $\hat{A}_X$. We evaluate these lists against the true optimal lists $A_{*, X}$ using \emph{error} $\mathbb{E}_X\left[V(A_{*,X}, X) - V(\hat{A}_X, X)\right]$. We estimate the logging policy $\pi_0$ from $\cD$. We repeat each experiment $500$ times and report all mean values with standard errors (shaded areas around lines in the plots).

We experiment with both Bayesian and frequentist lower confidence bounds in \cref{sec:lcbs}. They hold with probability $1 - \delta$, where $\delta \in [0.05, 1]$ is a tunable parameter representing the width of the confidence interval. The estimation of our model parameters is detailed in \cref{sec:click models}. Since Bayesian methods depend on the prior, we also evaluate empirical Bayes for learning the prior (\cref{sec: prior estimation}) with grid $\cG = \{2^{i-1}\}_{i=1}^{10}$.

\subsection{Baselines}
\label{sec:implementing optimizers}

The first baseline is the best list under the same click model as in our method but with MLE-estimated parameters. We also use the following baselines from prior works \citep{ionides_truncated_2008,li_offline_2018,swaminathan_off-policy_2017}.

\paragraph{IPS} We implement an estimator using propensity weights, where the whole list is a unique action. We compute the propensity weights separately for each query. We also add a \emph{tunable clipping parameter $M$} \citep{ionides_truncated_2008}. IPS optimizer then selects $\hat{A}_X$ that maximizes
\begin{align}
  \hat{V}_\textsc{IPS}(A,X)
  = \sum_{t \in \cT_X}
  \min\left\{M, \frac{\I{A_t = A}}{p_{A,X}}\right\}Y_t\,,
\end{align}
where we estimate propensities $p_{A,X} = \frac{\sum_{t \in \cT_X} \I{A_t = A}}{\abs{\cT_X}}$ as the frequency of recommending list $A$, $Y_t$ is the number of clicks in list $A_t$, and $\cT_X$ is the set of all indices $t \in [n]$ such that $X_t = X$. The maximization of $\hat{V}_\textsc{IPS}(A, X)$ is a linear program, where we search over all $A \in \Pi_K(\cE)$ \cite{strehl_learning_2010}.

\paragraph{Item-Position IPS} Similarly to the IPS estimator, we implement a structured IPS estimator using linearity of the item-position model \citep{li_offline_2018}, where the expected value of a list is the sum of attraction probabilities of its item-position entries. The list value is
\begin{align}
  \hat{V}_\textsc{IP-IPS}(A, X)
  = \sum_{t \in \cT_X}\sum_{k=1}^K
  \min\left\{M, \frac{\I{a_{t,k} = a}}{p_{a,k,X}}\right\}Y_{t,k}\,,
\end{align}
where $p_{a,k,X} = \frac{\sum_{t \in \cT_X} \I{a_{t,k} = a}}{\abs{\cT_X}}$ and $\cT_X$ is the set of all $t \in [n]$ such that $X_t = X$. To maximize $\hat{V}_\textsc{IP-IPS}(A, X)$, we select an item with the highest attraction probability for each position $k \in [K]$.

\paragraph{Pseudo-Inverse Estimator (PI)} We also compare to the pseudo-inverse estimator \citep{swaminathan_off-policy_2017}, which assumes that the value of a list is the sum of individual items in it. The context-specific weight vector $\phi_{X}$ can then be learned in a closed form as 
\begin{align}
\hat{\phi}_{X}=\left(\mathbb{E}_{\pi_0}\left[\mathbf{1}_A \mathbf{1}_A^{T} \mid X\right]\right)^{\dagger} \mathbb{E}_{\pi_0}\left[Y \mathbf{1}_A \mid X\right],
\end{align}
where $\mathbf{1}_A \in \{0, 1\}^{K \abs{\cE}}$ is a \emph{list indicator vector} whose entries indicate which item is at which position and $\mathbb{E}_{\pi_0}$ is an expectaion over $A \sim \pi_0(\cdot \mid X)$. We denote by $Y$ the sum of logged clicks on the list $A$ and by $M^\dagger$ the Moore-Penrose pseudo-inverse of a matrix $M$. As mentioned in \citep{swaminathan_off-policy_2017}, the trained regression model can be used for off-policy optimization. We greedily add the most attractive items to the list from the highest position to the lowest.

\subsection{Yandex Results}
\label{sec: results}

The experiments are organized as follows. First, we compare LCBs to all baselines on frequent queries while we vary the confidence interval width represented by parameter $\delta$. Second, we compare LCBs to MLE while automatically learning parameter $\delta$ from data. Finally, we study the robustness of model-based LCB estimators to model misspecification. We refer readers to \cref{appendix: other experiments} for experiments on less frequent queries, where we show that LCBs work well even with less data. We also experiment with the hyperparameters of empirical Bayes from \cref{sec: prior estimation}. Most plots are reported as a function of $\delta$ because it is a tunable parameter of our method. We map the clipping parameter $M$ to $\delta$ as follows.

\begin{table}[h]
\label{table: clipping}
\begin{center}
\begin{small}
\begin{sc}
\resizebox{\columnwidth}{!}{%
\begin{tabular}{lcccccccc}
\toprule
$\delta$ & .05 & .1 & .15 & .2 & .25 & .35 & .45 & .5 \\
$M$ & 1 & 5 & 10 & 50 & 100 & 300 & 500 & 600 \\ 
\midrule
$\delta$ & .55 & .65 & .75 & .8 & .85 & .9 & .95 & 1\\
$M$ & 700 & 900 & 1100 & 1200 & 1300 & 1400 & 1500 & $\infty$\\
\bottomrule
\end{tabular}%
}
\end{sc}
\end{small}
\end{center}
\vskip -18pt
\end{table}

\paragraph{Top 10 Queries}
We start with evaluating all estimators on $10$ most frequent queries in the Yandex dataset. Results in \cref{fig:02} show improvements when using LCBs for all models. Specifically, for almost any $\delta$ in all models, the error is lower than using MLE. When optimizing non-linear list values, such as those in CM and DCM, we outperform all the baselines that assume linearity. In PBM, where the list value is linear, the baselines can perform similarly to the LCB estimators. We observe that the empirical estimation of the prior improves upon an uninformative $\mathrm{Beta}(1, 1)$ prior.

\paragraph{More Realistic Comparison to MLE}
MLE is common in practice and does not have a hyper-parameter $\delta$ to tune, unlike our method. To show that our approach can beat the MLE in a realistic setting, we estimate $\delta$ on past data and then apply it to future data. 
We apply the evaluation protocol from the \emph{Top 10 Queries} experiment on the first $5$ days of data with fixed sample size $n = 1000$ for each query. We select $\delta$ that minimizes the Bayesian LCB error, which is $\delta = 0.2$.
We fix $\delta = 0.2$ and apply the evaluation protocol from the \emph{Top 10 Queries} experiment on the last $23$ days. We report the difference between MLE and Bayesian LCB errors.
This is repeated $500$ times while varying logged sample size $n \in [50, 500\ 000]$.
\cref{fig:04} shows that the largest improvements are at the sample size $100\ 000$. This implies that LCBs have a \emph{sweet spot} where they work the best. We observe smaller improvements for smaller sample sizes because the uncertainty is too high to allow effective modeling. On the other hand, when the sample size is large, the uncertainty is low everywhere and it is not needed to model it.

\paragraph{Robustness to Model Misspecification}
Now, we examine how the estimators behave when the model is misspecified. In the \emph{Top 10 Queries} experiment, we observe that the baselines with linearity assumptions do not perform well in non-linear models, such as CM or DCM, but they perform well when the value of a list is the sum of clicks, such as in PBM. We fit PBM and use it to generate a logged dataset. Then, we use DCM to learn the attraction probabilities for MLE and LCB methods. We also examine the opposite scenario, using DCM as a ground truth model and estimating attraction probabilities with PBM. This does not impact IPS and PI baselines. Our results are reported in \cref{fig:07}. 
In the left plot, we use a non-linear model to fit linear rewards. As a result, MLE and LCB methods are misspecified. Other baselines that assume linearity perform better in this setup. Nevertheless, Bayesian LCBs still have a 50\% lower error than MLE.
In the right plot, the reward is non-linear, and all methods (except IPS) assume linearity in item-level rewards. Here, MLE is comparable to other baselines, and LCBs consistently outperform all baselines. In summary, we show that LCBs are relatively robust to model misspecification, and definitely much more than MLE.

\begin{figure*}[t]
    \centering
    \begin{subfigure}[t]{0.8\linewidth}
     \includegraphics[width=\textwidth]{figs/legend.pdf}     
    \end{subfigure}
    \begin{subfigure}[t]{\linewidth}
        \includegraphics[width=\textwidth]{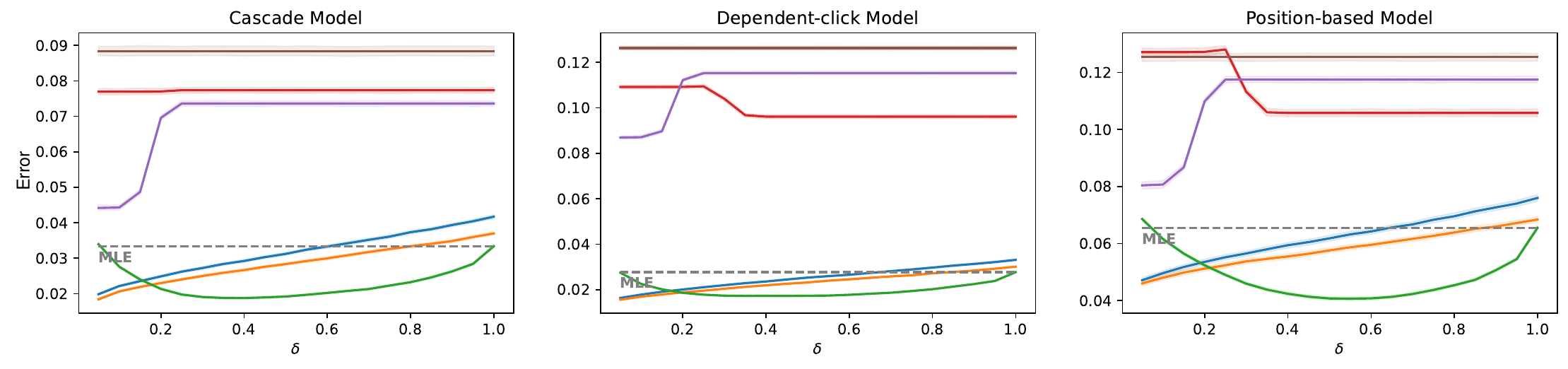}
    \end{subfigure}
    \caption{Comparison of our methods to baselines on the Yahoo! Webscope dataset.}
    \label{fig:01-yahoo}
    \vspace{2pt}
\end{figure*}
\begin{figure*}[t]
    \centering
    \begin{subfigure}[t]{0.8\linewidth}
     \includegraphics[width=\textwidth]{figs/legend.pdf}     
    \end{subfigure}
    \begin{subfigure}[t]{\linewidth}
    \includegraphics[width=\textwidth]{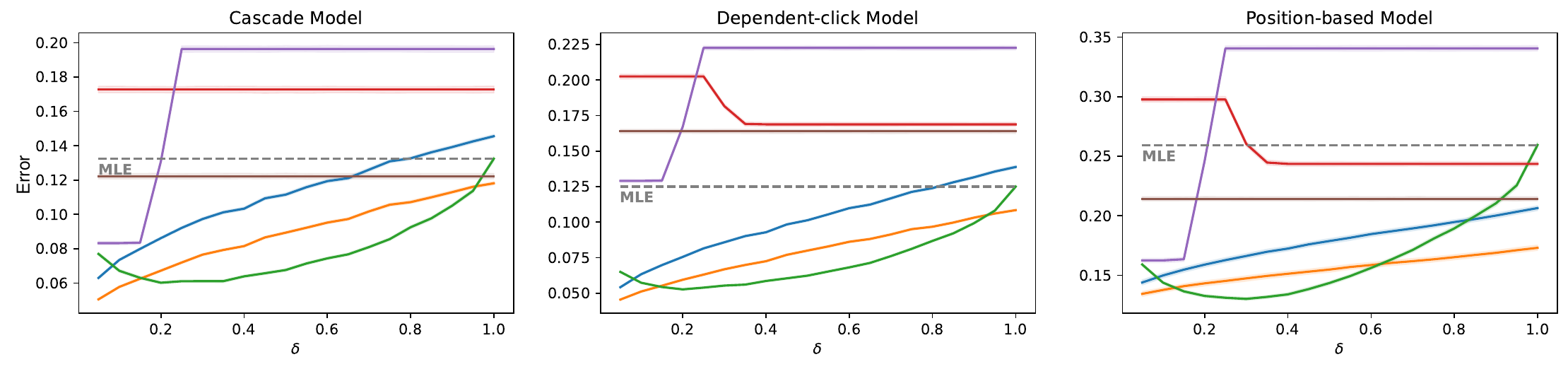}
    \end{subfigure}
    \caption{Comparison of our methods to baselines on the MSLR-WEB30k dataset.}
    \label{fig:01-mslr}
    \vspace{2pt}
\end{figure*}
\begin{figure*}[!t]
    \centering
    \begin{subfigure}[t]{0.8\linewidth}
     \includegraphics[width=\textwidth]{figs/legend.pdf}     
    \end{subfigure}
    \begin{subfigure}[t]{\linewidth}
    \includegraphics[width=\textwidth]{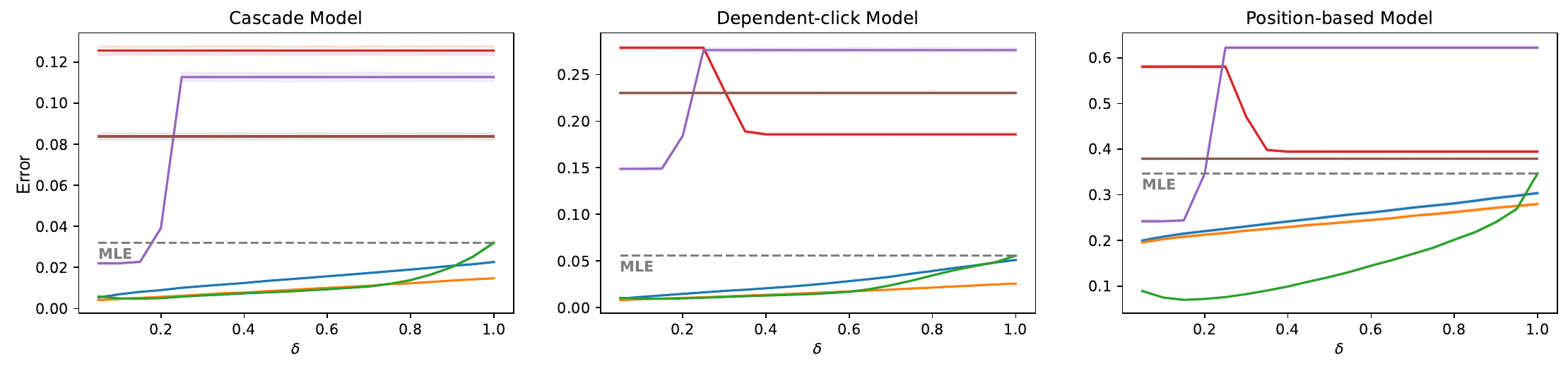}
    \end{subfigure}
    \caption{Comparison of our methods to baselines on the Istella dataset.}
    \label{fig:01-istella}
    \vspace{12pt}
\end{figure*}

\subsection{Results on Other Datasets}
We validated the results on other popular datasets, namely Yahoo! Webscope \citep{yahoo_c14_2010}, Istella \citep{dato_fast_2017}, and MSLR-WEB \citep{qin_introducing_2013}.
These datasets do not contain clicks, only human-labeled query-document relevance scores, with $\mathrm{score}(a) \in \set{0, 1, 2, 3, 4}$ for item $a$ ranked from 0 (not relevant) to 4 (highly relevant). We follow a standard procedure to generate clicks by mapping relevance scores to attraction probabilities based on the \emph{navigational} user model \citep[Table 2,][]{hofmann_fidelity_2013}.

\begin{table}[h]
\vspace{-5pt}
\begin{center}
\begin{small}
\begin{sc}
\begin{tabular}{lccccc}
\toprule
$\mathrm{score}(a)$ & 0 & 1 & 2 & 3 & 4 \\
$\theta_a$ & 0.05 & 0.1 & 0.2 & 0.4 & 0.8 \\ 
\bottomrule
\end{tabular}
\end{sc}
\end{small}
\end{center}
\vspace{-10pt}
\end{table}

\noindent In the PBM, we define position examination probabilities based on an eye-tracking experiment \citep{joachims_accurately_2017}. In the DCM, at position $k \in [K]$, $\lambda_k = 1 - \exp(-k + 0.5) / 0.5$. We randomly select $5000$ queries $q$. Each query has its own set of labeled documents denoted by $\cE_q$. To get a logged dataset, we sample $100$ lists of length $K = 4$ from labeled documents for each query from a Dirichlet distribution with parameters $\alpha = (\theta_a)_{a \in \cE_q}$. We use the same evaluation protocol as in \cref{sec: experimental setup}. \cref{fig:01-yahoo,fig:01-mslr,fig:01-istella} validate our earlier findings. In summary, LCBs outperform MLE and other baselines for most $\delta$ values and provide major improvements. We observed similar results for other sample sizes and list lengths.

\section{Conclusions}

We study pessimistic off-policy optimization in learning to rank. The key idea is to design lower confidence bounds (LCBs) for values of lists through LCBs of items in them. We prove that the error due to choosing the best list in our model is polynomial in the number of items in the list, as opposed to polynomial in the number of lists, which is exponential in the length of the list. We also apply LCBs to non-linear objectives, such as the CM in \eqref{eq:value cascade model} and DCM in \eqref{eq:value dependent-click model}, based on their linearization. This is the first paper in off-policy learning where this approximation was applied and analyzed. Our approach outperforms MLE and IPS methods experimentally. Furthermore, we show that our LCBs are robust to model misspecification and perform better with almost any confidence interval width. We also show how to estimate the prior with empirical Bayes when it is unknown.

One natural future direction is to extend our work to any click model. This can be generally achieved with an ensemble of models, each trained on a different bootstrapped dataset. This, however, presents computational challenges, and further research is needed to address them. We also do not use theory-suggested confidence intervals in experiments because they are too conservative. Therefore, more data-dependent confidence intervals are needed. Finally, the focus of our work is on a large action space of all lists. We wanted this contribution to stand out and thus focus on tabular contexts. An extension to large context spaces should be possible, though.

\begin{ack}
    This research was partially supported by DisAI, a project funded by the European Union under the Horizon Europe, GA No. 101079164, \url{https://doi.org/10.3030/101079164}.
\end{ack}
\clearpage

\bibliography{references}

\begin{thebibliography}{43}
\providecommand{\natexlab}[1]{#1}
\providecommand{\url}[1]{\texttt{#1}}
\expandafter\ifx\csname urlstyle\endcsname\relax
  \providecommand{\doi}[1]{doi: #1}\else
  \providecommand{\doi}{doi: \begingroup \urlstyle{rm}\Url}\fi

\bibitem[Bishop(2006)]{bishop_pattern_2006}
C.~M. Bishop.
\newblock \emph{Pattern recognition and machine learning}.
\newblock Information science and statistics. Springer, New York, 2006.
\newblock ISBN 978-0-387-31073-2.

\bibitem[Bottou et~al.(2013)Bottou, Peters, Quiñonero-Candela, Charles,
  Chickering, Portugaly, Ray, Simard, and Snelson]{bottou_counterfactual_2013}
L.~Bottou, J.~Peters, J.~Quiñonero-Candela, D.~X. Charles, D.~M. Chickering,
  E.~Portugaly, D.~Ray, P.~Simard, and E.~Snelson.
\newblock Counterfactual {Reasoning} and {Learning} {Systems}: {The} {Example}
  of {Computational} {Advertising}.
\newblock \emph{Journal of Machine Learning Research}, 14\penalty0
  (101):\penalty0 3207--3260, 2013.
\newblock ISSN 1533-7928.
\newblock URL \url{http://jmlr.org/papers/v14/bottou13a.html}.

\bibitem[Chapelle et~al.(2009)Chapelle, Metlzer, Zhang, and
  Grinspan]{chapelle_expected_2009}
O.~Chapelle, D.~Metlzer, Y.~Zhang, and P.~Grinspan.
\newblock Expected reciprocal rank for graded relevance.
\newblock In \emph{Proceedings of the 18th {ACM} conference on {Information}
  and knowledge management}, {CIKM} '09, pages 621--630, New York, NY, USA,
  Nov. 2009. Association for Computing Machinery.
\newblock ISBN 978-1-60558-512-3.
\newblock \doi{10.1145/1645953.1646033}.
\newblock URL \url{https://doi.org/10.1145/1645953.1646033}.

\bibitem[Chuklin et~al.(2015)Chuklin, Markov, and De~Rijke]{chuklin_click_2015}
A.~Chuklin, I.~Markov, and M.~De~Rijke.
\newblock \emph{Click {Models} for {Web} {Search}}.
\newblock Synthesis {Lectures} on {Information} {Concepts}, {Retrieval}, and
  {Services}. Springer International Publishing, Cham, 2015.
\newblock ISBN 978-3-031-01166-5 978-3-031-02294-4.
\newblock \doi{10.1007/978-3-031-02294-4}.
\newblock URL \url{https://link.springer.com/10.1007/978-3-031-02294-4}.

\bibitem[Craswell et~al.(2008)Craswell, Zoeter, Taylor, and
  Ramsey]{craswell_experimental_2008}
N.~Craswell, O.~Zoeter, M.~Taylor, and B.~Ramsey.
\newblock An experimental comparison of click position-bias models.
\newblock In \emph{Proceedings of the 2008 {International} {Conference} on
  {Web} {Search} and {Data} {Mining}}, {WSDM} '08, pages 87--94, New York, NY,
  USA, Feb. 2008. Association for Computing Machinery.
\newblock ISBN 978-1-59593-927-2.
\newblock \doi{10.1145/1341531.1341545}.
\newblock URL \url{https://doi.org/10.1145/1341531.1341545}.

\bibitem[Dato et~al.(2017)Dato, Lucchese, Nardini, Orlando, Perego, Tonellotto,
  and Venturini]{dato_fast_2017}
D.~Dato, C.~Lucchese, F.~M. Nardini, S.~Orlando, R.~Perego, N.~Tonellotto, and
  R.~Venturini.
\newblock Fast {Ranking} with {Additive} {Ensembles} of {Oblivious} and
  {Non}-{Oblivious} {Regression} {Trees}.
\newblock \emph{ACM Transactions on Information Systems}, 35\penalty0
  (2):\penalty0 1--31, Apr. 2017.
\newblock ISSN 1046-8188, 1558-2868.
\newblock \doi{10.1145/2987380}.
\newblock URL \url{https://dl.acm.org/doi/10.1145/2987380}.

\bibitem[Dudik et~al.(2014)Dudik, Erhan, Langford, and Li]{dudik_doubly_2014}
M.~Dudik, D.~Erhan, J.~Langford, and L.~Li.
\newblock Doubly {Robust} {Policy} {Evaluation} and {Optimization}.
\newblock \emph{Statistical Science}, 29\penalty0 (4):\penalty0 485--511, Nov.
  2014.
\newblock ISSN 0883-4237, 2168-8745.
\newblock \doi{10.1214/14-sts500}.
\newblock URL
  \url{https://projecteuclid.org/journals/statistical-science/volume-29/issue-4/Doubly-Robust-Policy-Evaluation-and-Optimization/10.1214/14-STS500.full}.

\bibitem[Gilotte et~al.(2018)Gilotte, Calauzènes, Nedelec, Abraham, and
  Dollé]{gilotte_offline_2018}
A.~Gilotte, C.~Calauzènes, T.~Nedelec, A.~Abraham, and S.~Dollé.
\newblock Offline {A}/{B} {Testing} for {Recommender} {Systems}.
\newblock In \emph{Proceedings of the {Eleventh} {ACM} {International}
  {Conference} on {Web} {Search} and {Data} {Mining}}, {WSDM} '18, pages
  198--206, New York, NY, USA, Feb. 2018. Association for Computing Machinery.
\newblock ISBN 978-1-4503-5581-0.
\newblock \doi{10.1145/3159652.3159687}.
\newblock URL \url{https://doi.org/10.1145/3159652.3159687}.

\bibitem[Guo et~al.(2009)Guo, Liu, and Wang]{guo_efficient_2009}
F.~Guo, C.~Liu, and Y.~M. Wang.
\newblock Efficient multiple-click models in web search.
\newblock In \emph{Proceedings of the {Second} {ACM} {International}
  {Conference} on {Web} {Search} and {Data} {Mining}}, {WSDM} '09, pages
  124--131, New York, NY, USA, Feb. 2009. Association for Computing Machinery.
\newblock ISBN 978-1-60558-390-7.
\newblock \doi{10.1145/1498759.1498818}.
\newblock URL \url{https://dl.acm.org/doi/10.1145/1498759.1498818}.

\bibitem[He et~al.(2014)He, Pan, Jin, Xu, Liu, Xu, Shi, Atallah, Herbrich,
  Bowers, and Candela]{he_practical_2014}
X.~He, J.~Pan, O.~Jin, T.~Xu, B.~Liu, T.~Xu, Y.~Shi, A.~Atallah, R.~Herbrich,
  S.~Bowers, and J.~Q. Candela.
\newblock Practical {Lessons} from {Predicting} {Clicks} on {Ads} at
  {Facebook}.
\newblock In \emph{Proceedings of the {Eighth} {International} {Workshop} on
  {Data} {Mining} for {Online} {Advertising}}, {ADKDD}'14, pages 1--9, New
  York, NY, USA, Aug. 2014. Association for Computing Machinery.
\newblock ISBN 978-1-4503-2999-6.
\newblock \doi{10.1145/2648584.2648589}.
\newblock URL \url{https://doi.org/10.1145/2648584.2648589}.

\bibitem[Hofmann et~al.(2013)Hofmann, Whiteson, and
  Rijke]{hofmann_fidelity_2013}
K.~Hofmann, S.~Whiteson, and M.~D. Rijke.
\newblock Fidelity, {Soundness}, and {Efficiency} of {Interleaved} {Comparison}
  {Methods}.
\newblock \emph{ACM Trans. Inf. Syst.}, 31\penalty0 (4):\penalty0 17:1--17:43,
  Nov. 2013.
\newblock ISSN 1046-8188.
\newblock \doi{10.1145/2536736.2536737}.
\newblock URL \url{https://doi.org/10.1145/2536736.2536737}.

\bibitem[Hong et~al.(2021)Hong, Kveton, Zaheer, Chow, and
  Ahmed]{hong_non-stationary_2021}
J.~Hong, B.~Kveton, M.~Zaheer, Y.~Chow, and A.~Ahmed.
\newblock Non-{Stationary} {Off}-{Policy} {Optimization}.
\newblock In \emph{Proceedings of {The} 24th {International} {Conference} on
  {Artificial} {Intelligence} and {Statistics}}, pages 2494--2502. PMLR, Mar.
  2021.
\newblock URL \url{https://proceedings.mlr.press/v130/hong21a.html}.
\newblock ISSN: 2640-3498.

\bibitem[Hong et~al.(2023)Hong, Kveton, Zaheer, Katariya, and
  Ghavamzadeh]{hong_multi-task_2023}
J.~Hong, B.~Kveton, M.~Zaheer, S.~Katariya, and M.~Ghavamzadeh.
\newblock Multi-{Task} {Off}-{Policy} {Learning} from {Bandit} {Feedback}.
\newblock In \emph{Proceedings of the 40th {International} {Conference} on
  {Machine} {Learning}}, pages 13157--13173. PMLR, July 2023.
\newblock URL \url{https://proceedings.mlr.press/v202/hong23a.html}.
\newblock ISSN: 2640-3498.

\bibitem[Horvitz and Thompson(1951)]{horvitz_generalization_1951}
D.~G. Horvitz and D.~J. Thompson.
\newblock A {Generalization} of {Sampling} {Without} {Replacement} from a
  {Finite} {Universe}.
\newblock \emph{Journal of the American Statistical Association}, Mar. 1951.
\newblock ISSN 1048-3446.
\newblock URL
  \url{https://www.tandfonline.com/doi/abs/10.1080/01621459.1952.10483446}.
\newblock Publisher: Taylor \& Francis Group.

\bibitem[Ionides(2008)]{ionides_truncated_2008}
E.~L. Ionides.
\newblock Truncated {Importance} {Sampling}.
\newblock \emph{Journal of Computational and Graphical Statistics}, 17\penalty0
  (2):\penalty0 295--311, June 2008.
\newblock ISSN 1061-8600, 1537-2715.
\newblock \doi{10.1198/106186008x320456}.
\newblock URL
  \url{http://www.tandfonline.com/doi/abs/10.1198/106186008X320456}.

\bibitem[Jeunen and Goethals(2021)]{jeunen_pessimistic_2021}
O.~Jeunen and B.~Goethals.
\newblock Pessimistic {Reward} {Models} for {Off}-{Policy} {Learning} in
  {Recommendation}.
\newblock In \emph{Proceedings of the 15th {ACM} {Conference} on {Recommender}
  {Systems}}, {RecSys} '21, pages 63--74, New York, NY, USA, Sept. 2021.
  Association for Computing Machinery.
\newblock ISBN 978-1-4503-8458-2.
\newblock \doi{10.1145/3460231.3474247}.
\newblock URL \url{https://dl.acm.org/doi/10.1145/3460231.3474247}.

\bibitem[Jeunen et~al.(2020)Jeunen, Rohde, Vasile, and
  Bompaire]{jeunen_joint_2020}
O.~Jeunen, D.~Rohde, F.~Vasile, and M.~Bompaire.
\newblock Joint {Policy}-{Value} {Learning} for {Recommendation}.
\newblock In \emph{Proceedings of the 26th {ACM} {SIGKDD} {International}
  {Conference} on {Knowledge} {Discovery} \& {Data} {Mining}}, {KDD} '20, pages
  1223--1233, New York, NY, USA, Aug. 2020. Association for Computing
  Machinery.
\newblock ISBN 978-1-4503-7998-4.
\newblock \doi{10.1145/3394486.3403175}.
\newblock URL \url{https://doi.org/10.1145/3394486.3403175}.

\bibitem[Jin et~al.(2021)Jin, Yang, and Wang]{jin_is_2021}
Y.~Jin, Z.~Yang, and Z.~Wang.
\newblock Is {Pessimism} {Provably} {Efficient} for {Offline} {RL}?
\newblock In \emph{Proceedings of the 38th {International} {Conference} on
  {Machine} {Learning}}, pages 5084--5096. PMLR, July 2021.
\newblock URL \url{https://proceedings.mlr.press/v139/jin21e.html}.
\newblock ISSN: 2640-3498.

\bibitem[Joachims(2002)]{joachims_optimizing_2002}
T.~Joachims.
\newblock Optimizing search engines using clickthrough data.
\newblock In \emph{Proceedings of the eighth {ACM} {SIGKDD} international
  conference on {Knowledge} discovery and data mining}, {KDD} '02, pages
  133--142, New York, NY, USA, July 2002. Association for Computing Machinery.
\newblock ISBN 978-1-58113-567-1.
\newblock \doi{10.1145/775047.775067}.
\newblock URL \url{https://doi.org/10.1145/775047.775067}.

\bibitem[Joachims et~al.(2007)Joachims, Granka, Pan, Hembrooke, Radlinski, and
  Gay]{joachims_evaluating_2007}
T.~Joachims, L.~Granka, B.~Pan, H.~Hembrooke, F.~Radlinski, and G.~Gay.
\newblock Evaluating the accuracy of implicit feedback from clicks and query
  reformulations in {Web} search.
\newblock \emph{ACM Trans. Inf. Syst.}, 25\penalty0 (2):\penalty0 7--es, Apr.
  2007.
\newblock ISSN 1046-8188.
\newblock \doi{10.1145/1229179.1229181}.
\newblock URL \url{https://doi.org/10.1145/1229179.1229181}.

\bibitem[Joachims et~al.(2017{\natexlab{a}})Joachims, Granka, Pan, Hembrooke,
  and Gay]{joachims_accurately_2017}
T.~Joachims, L.~Granka, B.~Pan, H.~Hembrooke, and G.~Gay.
\newblock Accurately {Interpreting} {Clickthrough} {Data} as {Implicit}
  {Feedback}.
\newblock \emph{SIGIR Forum}, 51\penalty0 (1):\penalty0 4--11, Aug.
  2017{\natexlab{a}}.
\newblock ISSN 0163-5840.
\newblock \doi{10.1145/3130332.3130334}.
\newblock URL \url{https://doi.org/10.1145/3130332.3130334}.

\bibitem[Joachims et~al.(2017{\natexlab{b}})Joachims, Swaminathan, and
  Schnabel]{joachims_unbiased_2017}
T.~Joachims, A.~Swaminathan, and T.~Schnabel.
\newblock Unbiased {Learning}-to-{Rank} with {Biased} {Feedback}.
\newblock In \emph{Proceedings of the {Tenth} {ACM} {International}
  {Conference} on {Web} {Search} and {Data} {Mining}}, {WSDM} '17, pages
  781--789, New York, NY, USA, Feb. 2017{\natexlab{b}}. Association for
  Computing Machinery.
\newblock ISBN 978-1-4503-4675-7.
\newblock \doi{10.1145/3018661.3018699}.
\newblock URL \url{https://dl.acm.org/doi/10.1145/3018661.3018699}.

\bibitem[Joachims et~al.(2018)Joachims, Swaminathan, and
  Rijke]{joachims_deep_2018}
T.~Joachims, A.~Swaminathan, and M.~d. Rijke.
\newblock Deep {Learning} with {Logged} {Bandit} {Feedback}.
\newblock In \emph{International {Conference} on {Learning} {Representations}},
  Feb. 2018.
\newblock URL
  \url{https://openreview.net/forum?id=SJaP_-xAb&source=post_page---------------------------}.

\bibitem[Katariya et~al.(2016)Katariya, Kveton, Szepesvari, and
  Wen]{katariya_dcm_2016}
S.~Katariya, B.~Kveton, C.~Szepesvari, and Z.~Wen.
\newblock {DCM} {Bandits}: {Learning} to {Rank} with {Multiple} {Clicks}.
\newblock In \emph{Proceedings of {The} 33rd {International} {Conference} on
  {Machine} {Learning}}, pages 1215--1224. PMLR, June 2016.
\newblock URL \url{https://proceedings.mlr.press/v48/katariya16.html}.
\newblock ISSN: 1938-7228.

\bibitem[Kiyohara et~al.(2022)Kiyohara, Saito, Matsuhiro, Narita, Shimizu, and
  Yamamoto]{kiyohara_doubly_2022}
H.~Kiyohara, Y.~Saito, T.~Matsuhiro, Y.~Narita, N.~Shimizu, and Y.~Yamamoto.
\newblock Doubly {Robust} {Off}-{Policy} {Evaluation} for {Ranking} {Policies}
  under the {Cascade} {Behavior} {Model}.
\newblock In \emph{Proceedings of the {Fifteenth} {ACM} {International}
  {Conference} on {Web} {Search} and {Data} {Mining}}, {WSDM} '22, pages
  487--497, New York, NY, USA, Feb. 2022. Association for Computing Machinery.
\newblock ISBN 978-1-4503-9132-0.
\newblock \doi{10.1145/3488560.3498380}.
\newblock URL \url{https://dl.acm.org/doi/10.1145/3488560.3498380}.

\bibitem[Kveton et~al.(2015)Kveton, Szepesvari, Wen, and
  Ashkan]{kveton_cascading_2015}
B.~Kveton, C.~Szepesvari, Z.~Wen, and A.~Ashkan.
\newblock Cascading {Bandits}: {Learning} to {Rank} in the {Cascade} {Model}.
\newblock In \emph{Proceedings of the 32nd {International} {Conference} on
  {Machine} {Learning}}, pages 767--776. PMLR, June 2015.
\newblock URL \url{https://proceedings.mlr.press/v37/kveton15.html}.
\newblock ISSN: 1938-7228.

\bibitem[Li et~al.(2010)Li, Chu, Langford, and
  Schapire]{li_contextual-bandit_2010}
L.~Li, W.~Chu, J.~Langford, and R.~E. Schapire.
\newblock A contextual-bandit approach to personalized news article
  recommendation.
\newblock In \emph{Proceedings of the 19th international conference on {World}
  wide web}, {WWW} '10, pages 661--670, New York, NY, USA, Apr. 2010.
  Association for Computing Machinery.
\newblock ISBN 978-1-60558-799-8.
\newblock \doi{10.1145/1772690.1772758}.
\newblock URL \url{https://dl.acm.org/doi/10.1145/1772690.1772758}.

\bibitem[Li et~al.(2018)Li, Abbasi-Yadkori, Kveton, Muthukrishnan, Vinay, and
  Wen]{li_offline_2018}
S.~Li, Y.~Abbasi-Yadkori, B.~Kveton, S.~Muthukrishnan, V.~Vinay, and Z.~Wen.
\newblock Offline {Evaluation} of {Ranking} {Policies} with {Click} {Models}.
\newblock In \emph{Proceedings of the 24th {ACM} {SIGKDD} {International}
  {Conference} on {Knowledge} {Discovery} \& {Data} {Mining}}, {KDD} '18, pages
  1685--1694, New York, NY, USA, July 2018. Association for Computing
  Machinery.
\newblock ISBN 978-1-4503-5552-0.
\newblock \doi{10.1145/3219819.3220028}.
\newblock URL \url{https://dl.acm.org/doi/10.1145/3219819.3220028}.

\bibitem[Maritz(2017)]{maritz_empirical_2017}
J.~S. Maritz.
\newblock \emph{Empirical {Bayes} {Methods} with {Applications}}.
\newblock Chapman and Hall/CRC, New York, 2 edition, Dec. 2017.
\newblock ISBN 978-1-351-07166-6.
\newblock \doi{10.1201/9781351071666}.

\bibitem[McInerney et~al.(2018)McInerney, Lacker, Hansen, Higley, Bouchard,
  Gruson, and Mehrotra]{mcinerney_explore_2018}
J.~McInerney, B.~Lacker, S.~Hansen, K.~Higley, H.~Bouchard, A.~Gruson, and
  R.~Mehrotra.
\newblock Explore, exploit, and explain: personalizing explainable
  recommendations with bandits.
\newblock In \emph{Proceedings of the 12th {ACM} {Conference} on {Recommender}
  {Systems}}, {RecSys} '18, pages 31--39, New York, NY, USA, Sept. 2018.
  Association for Computing Machinery.
\newblock ISBN 978-1-4503-5901-6.
\newblock \doi{10.1145/3240323.3240354}.
\newblock URL \url{https://doi.org/10.1145/3240323.3240354}.

\bibitem[Qin and Liu(2013)]{qin_introducing_2013}
T.~Qin and T.-Y. Liu.
\newblock Introducing {LETOR} 4.0 {Datasets}, June 2013.
\newblock URL \url{http://arxiv.org/abs/1306.2597}.
\newblock arXiv:1306.2597 [cs].

\bibitem[Richardson et~al.(2007)Richardson, Dominowska, and
  Ragno]{richardson_predicting_2007}
M.~Richardson, E.~Dominowska, and R.~Ragno.
\newblock Predicting clicks: estimating the click-through rate for new ads.
\newblock In \emph{Proceedings of the 16th international conference on {World}
  {Wide} {Web}}, {WWW} '07, pages 521--530, New York, NY, USA, May 2007.
  Association for Computing Machinery.
\newblock ISBN 978-1-59593-654-7.
\newblock \doi{10.1145/1242572.1242643}.
\newblock URL \url{https://doi.org/10.1145/1242572.1242643}.

\bibitem[Robins et~al.(1994)Robins, Rotnitzky, and
  Zhao]{robins_estimation_1994}
J.~M. Robins, A.~Rotnitzky, and L.~P. Zhao.
\newblock Estimation of {Regression} {Coefficients} {When} {Some} {Regressors}
  are not {Always} {Observed}.
\newblock \emph{Journal of the American Statistical Association}, 89\penalty0
  (427):\penalty0 846--866, Sept. 1994.
\newblock ISSN 0162-1459.
\newblock \doi{10.1080/01621459.1994.10476818}.
\newblock URL \url{https://doi.org/10.1080/01621459.1994.10476818}.
\newblock Publisher: Taylor \& Francis \_eprint:
  https://doi.org/10.1080/01621459.1994.10476818.

\bibitem[Smith and Winkler(2006)]{smith_optimizers_2006}
J.~E. Smith and R.~L. Winkler.
\newblock The {Optimizer}’s {Curse}: {Skepticism} and {Postdecision}
  {Surprise} in {Decision} {Analysis}.
\newblock \emph{Management Science}, 52\penalty0 (3):\penalty0 311--322, Mar.
  2006.
\newblock ISSN 0025-1909.
\newblock \doi{10.1287/mnsc.1050.0451}.
\newblock URL
  \url{https://pubsonline.informs.org/doi/abs/10.1287/mnsc.1050.0451}.
\newblock Publisher: INFORMS.

\bibitem[Strehl et~al.(2010)Strehl, Langford, Li, and
  Kakade]{strehl_learning_2010}
A.~Strehl, J.~Langford, L.~Li, and S.~M. Kakade.
\newblock Learning from {Logged} {Implicit} {Exploration} {Data}.
\newblock In \emph{Advances in {Neural} {Information} {Processing} {Systems}},
  volume~23. Curran Associates, Inc., 2010.
\newblock URL
  \url{https://proceedings.neurips.cc/paper_files/paper/2010/hash/c0f168ce8900fa56e57789e2a2f2c9d0-Abstract.html}.

\bibitem[Swaminathan(2017)]{swaminathan_counterfactual_2017}
A.~Swaminathan.
\newblock \emph{Counterfactual {Evaluation} {And} {Learning} {From} {Logged}
  {User} {Feedback}}.
\newblock PhD thesis, Cornell University, Ithaca, NY, United States, May 2017.
\newblock URL \url{https://ecommons.cornell.edu/handle/1813/51557}.

\bibitem[Swaminathan and Joachims(2015)]{swaminathan_self-normalized_2015}
A.~Swaminathan and T.~Joachims.
\newblock The {Self}-{Normalized} {Estimator} for {Counterfactual} {Learning}.
\newblock In \emph{Advances in {Neural} {Information} {Processing} {Systems}},
  volume~28. Curran Associates, Inc., 2015.
\newblock URL
  \url{https://proceedings.neurips.cc/paper/2015/hash/39027dfad5138c9ca0c474d71db915c3-Abstract.html}.

\bibitem[Swaminathan et~al.(2017)Swaminathan, Krishnamurthy, Agarwal, Dudik,
  Langford, Jose, and Zitouni]{swaminathan_off-policy_2017}
A.~Swaminathan, A.~Krishnamurthy, A.~Agarwal, M.~Dudik, J.~Langford, D.~Jose,
  and I.~Zitouni.
\newblock Off-policy evaluation for slate recommendation.
\newblock In \emph{Advances in {Neural} {Information} {Processing} {Systems}},
  volume~30. Curran Associates, Inc., 2017.
\newblock URL
  \url{https://proceedings.neurips.cc/paper_files/paper/2017/hash/5352696a9ca3397beb79f116f3a33991-Abstract.html}.

\bibitem[Takács and Tikk(2012)]{takacs_alternating_2012}
G.~Takács and D.~Tikk.
\newblock Alternating least squares for personalized ranking.
\newblock In \emph{Proceedings of the sixth {ACM} conference on {Recommender}
  systems}, {RecSys} '12, pages 83--90, New York, NY, USA, Sept. 2012.
  Association for Computing Machinery.
\newblock ISBN 978-1-4503-1270-7.
\newblock \doi{10.1145/2365952.2365972}.
\newblock URL \url{https://doi.org/10.1145/2365952.2365972}.

\bibitem[Vershynin(2018)]{vershynin_high-dimensional_2018}
R.~Vershynin.
\newblock \emph{High-{Dimensional} {Probability}: {An} {Introduction} with
  {Applications} in {Data} {Science}}.
\newblock Cambridge University Press, Sept. 2018.
\newblock ISBN 978-1-108-24454-1.
\newblock Google-Books-ID: TahxDwAAQBAJ.

\bibitem[Xie et~al.(2021)Xie, Cheng, Jiang, Mineiro, and
  Agarwal]{xie_bellman-consistent_2021}
T.~Xie, C.-A. Cheng, N.~Jiang, P.~Mineiro, and A.~Agarwal.
\newblock Bellman-consistent {Pessimism} for {Offline} {Reinforcement}
  {Learning}.
\newblock In \emph{Advances in {Neural} {Information} {Processing} {Systems}},
  volume~34, pages 6683--6694. Curran Associates, Inc., 2021.
\newblock URL
  \url{https://proceedings.neurips.cc/paper_files/paper/2021/hash/34f98c7c5d7063181da890ea8d25265a-Abstract.html}.

\bibitem[{Yahoo!}(2010)]{yahoo_c14_2010}
{Yahoo!}
\newblock C14 - {Yahoo}! {Learning} to {Rank} {Challenge}, 2010.
\newblock URL \url{http://webscope.sandbox.yahoo.com/}.

\bibitem[{Yandex}(2013)]{yandex_yandex_2013}
{Yandex}.
\newblock Yandex {Personalized} {Web} {Search} {Challenge}, 2013.
\newblock URL
  \url{https://www.kaggle.com/c/yandex-personalized-web-search-challenge}.

\end{thebibliography}

\clearpage
\onecolumn
\appendix

\section{Learning Click Model Parameters}
\label{sec:click models}

To simplify notation, we only consider a small finite number of contexts, such as the day of the week or user characteristics. For each such context, we estimate the attraction probabilities separately. In this section, we show the calculation of model parameters with respective applications of LCB to the three click models mentioned in \cref{sec:structured pessimism}. We assume that the context is fixed, and computations are done over each context separately; therefore, we define $\cT_X$ to be the set of all indices of $[n]$ such that $X_t = X, \; \forall t \in \cT_X$.

\paragraph{Cascade Model}
The cascade model has only one type of parameters, attraction probabilities $\theta_a$, and these can be estimated from clicks, as the number of clicks over the number of examinations \citep{kveton_cascading_2015}. According to the cascade model assumptions, items are examined from the top until the user clicks on some item, and the user does not examine anything further. Therefore to model $\theta_{a,X}$, we collect the number of positive impressions $n_{a,X,+} = \sum_{t \in \cT_X} \sum_{k = 1}^K \I{a_{t, k} = a \land Y_{t, k} = 1}$ (user examined and clicked) and the number of negative impressions $n_{a,X,-} = \sum_{t \in \cT_X} \sum_{k = 1}^K \I{a_{t, k} = a}\I {\sum_{j=1}^{k-1} Y_{t, j} = 0}$ (examined, but did not click) for item $a$ in context $X$ and calculate either Bayesian, frequentist LCBs, and prior according to \eqref{eq:bayes quantile}, \eqref{eq:hoeffding inequality}, and \cref{sec: prior estimation}.

\paragraph{Dependent-Click Model}
When fitting the dependent-click model, we process the logged data according to the following assumption: examining items from the top until the final observed click at the lowest position and disregarding all items below. Observed impressions on each item $a$ in context $X$ are sampled from its unknown Bernoulli distribution $\mathrm{Ber}(\theta_{a,X})$. To model $\theta_{a,X}$, we collect the number of positive impressions $n_{a,X,+} = \sum_{t \in \cT_X} \sum_{k = 1}^K \I{a_{t, k} = a}Y_{t, k}$ (clicks) and the number of negative impressions $n_{a,X,-} = \sum_{t \in \cT_X} \sum_{k = 1}^K \I{a_{t, k} = a}(1-Y_{t, k})$ (examined, but not clicked) for item $a$ and calculate either Bayesian, frequentist LCBs, and prior according to \eqref{eq:bayes quantile}, \eqref{eq:hoeffding inequality}, and \cref{sec: prior estimation}. To model the probability $\lambda_{k,X}$, we collect positive observations that the click is the last $n_{k,X,+} = \sum_{t \in \cT_X} \I{\sum_{j = k}^K Y_{t, j} = 1}Y_{t, k}$ and negative observations that user continues exploring as $n_{k,X,-} = \sum_{t \in \cT_X}\I{\sum_{j = k}^K Y_{t, j} > 1}Y_{t, k}$.

\paragraph{Position-Based Model}
To learn the parameters of the position-based model, we use an EM algorithm. We solve $\min_{\theta, p} \sum_{t = 1}^n \sum_{k = 1}^K (\theta_{a_{t, k}, X_t} p_k - Y_{t, k})^2$ by alternating least squares algorithm \cite{takacs_alternating_2012} to obtain an estimate of $p$. Then the propensity-weighted number of positive impressions is $n_{a,X,+} = \sum_{t \in \cT_X} \sum_{k = 1}^K \I{a_{t, k} = a \land Y_{t, k} = 1} / p_k$ and the number of negative impressions $n_{a,X,-} = \sum_{t \in \cT_X} \sum_{k = 1}^K \I{a_{t, k} = a \land Y_{t, k} = 0} / p_k$.

\section{Proofs of Pessimistic Optimization}
\label{appendix: pessimistic optimization}

\paragraph{Proof of \cref{lemma: concentration list}}\\
\paragraph{PBM} By \cref{lemma: concentration item}, for any item $a \in \cE$ and context $X \in \cX$, we have that
\begin{align*}
  |\theta_{a, X} - \hat{\theta}_{a, X}|
  \leq \sqrt{\log(1 / \delta ) / (2 n_{a, X})}
\end{align*}
holds with probability at least $1 - \delta$. Therefore, by the union bound, we have that
\begin{align}
  |\theta_{a, X} - \hat{\theta}_{a, X}|
  \leq \sqrt{\log(\abs{\cE} \abs{\cX} / \delta) / (2 n_{a, X})}
  \label{eq:joint concentration}
\end{align}
holds with probability at least $1 - \delta$, jointly over all items $a$ and contexts $X$.

Now, we prove our main claim. For any context $X$ and list $A = (a_k)_{k \in [K]}$, we have from the definition of the PBM that
\begin{align*}
  V_\textsc{pbm}(A, X) - \hat{V}_\textsc{pbm}(A, X)
  = \sum_{k = 1}^K p_{k,X} (\theta_{a_k, X} - \hat{\theta}_{a_k, X})\,.
\end{align*}
Since $p_{k,X} \in [0, 1]$, we have
\begin{align}
  \abs{V_\textsc{pbm}(A, X) - \hat{V}_\textsc{pbm}(A, X)}
  \leq \sum_{k = 1}^K |\theta_{a_k, X} - \hat{\theta}_{a_k, X}|
  = \sum_{a \in A} |\theta_{a, X} - \hat{\theta}_{a, X}|
  \leq \sum_{a \in A} \sqrt{\log(\abs{\cE} \abs{\cX} / \delta) / (2 n_{a, X})}\,.
\end{align}
The last inequality is under the assumption that \eqref{eq:joint concentration} holds, which holds with probability at least $1 - \delta$.

\paragraph{CM}
To prove the bound for CM and DCM, we first show how the difference of two products with $K$ variables is bounded by the difference of their sums.

\begin{lemma}
\label{lemma: products bounded by sum}
Let $(a_k)_{k=1}^K \in [0, 1]^K$ and $(b_k)_{k=1}^K \in [0, 1]^K$. Then
\begin{align*}
    \abs{\prod_{k=1}^Ka_k - \prod_{k=1}^Kb_k}
    \leq \sum_{k=1}^K\abs{a_k - b_k}.
\end{align*}
\end{lemma}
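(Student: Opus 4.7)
The plan is to proceed by a standard telescoping argument, using the hypothesis that all $a_k, b_k \in [0,1]$ to discard factors bounded by one. Concretely, I would rewrite the difference of the two products as a telescoping sum
\begin{equation*}
  \prod_{k=1}^K a_k - \prod_{k=1}^K b_k
  = \sum_{j=1}^K \left(\prod_{k=1}^{j-1} b_k\right)(a_j - b_j)\left(\prod_{k=j+1}^K a_k\right),
\end{equation*}
where the empty products (for $j=1$ and $j=K$) are interpreted as $1$. Correctness of this identity is immediate by cancellation of consecutive terms in the telescoping sum. Then, taking absolute values, applying the triangle inequality, and using that each factor $a_k$ and $b_k$ lies in $[0,1]$ (so the two product factors on each side of $(a_j - b_j)$ are at most $1$) yields the claimed bound $\sum_{k=1}^K |a_k - b_k|$.

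Alternatively, and equivalently, one can proceed by induction on $K$. The base case $K = 1$ is trivial. For the inductive step, write
\begin{equation*}
  a_K \prod_{k=1}^{K-1} a_k - b_K \prod_{k=1}^{K-1} b_k
  = a_K\!\left(\prod_{k=1}^{K-1} a_k - \prod_{k=1}^{K-1} b_k\right) + (a_K - b_K)\prod_{k=1}^{K-1} b_k,
\end{equation*}
take absolute values, use $a_K \leq 1$ and $\prod_{k=1}^{K-1} b_k \leq 1$, and invoke the inductive hypothesis on the first term. This delivers the same inequality.

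There is no real obstacle here; the only mild point of care is ensuring the empty-product convention is used consistently at the endpoints of the telescoping sum, and that the $[0,1]$ hypothesis is applied to the correct factors (the left block of $b$'s and the right block of $a$'s). Once the lemma is in hand, it will combine with \cref{lemma: concentration item} and the union bound \eqref{eq:joint concentration} exactly as in the PBM case: for CM, taking $a_k = 1 - \theta_{a_k,X}$ and $b_k = 1 - \hat{\theta}_{a_k,X}$ gives $|a_k - b_k| = |\theta_{a_k,X} - \hat{\theta}_{a_k,X}|$ and hence the stated list-level concentration; for DCM the same argument applies with $a_k = 1 - (1 - \lambda_k)\theta_{a_k,X}$ and $b_k = 1 - (1 - \lambda_k)\hat{\theta}_{a_k,X}$, using $(1 - \lambda_k) \in [0,1]$ to keep $|a_k - b_k| \leq |\theta_{a_k,X} - \hat{\theta}_{a_k,X}|$.
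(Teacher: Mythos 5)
Your proposal is correct and matches the paper's own argument: the paper proves the lemma via the same telescoping identity (with the roles of the $a$- and $b$-blocks mirrored, which is immaterial), followed by the triangle inequality and the $[0,1]$ bounds on the side products. The application to CM and DCM you describe is also exactly how the paper uses the lemma.
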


\begin{proof}
We start with
\begin{align*}
  \prod_{k = 1}^K a_k - \prod_{k = 1}^K b_k
  & = \prod_{k = 1}^K a_k - a_1 \prod_{k = 2}^K b_k +
  a_1 \prod_{k = 2}^K b_k - \prod_{k = 1}^K b_k
  = a_1 \left(\prod_{k = 2}^K a_k - \prod_{k = 2}^K b_k\right) +
  (a_1 - b_1) \prod_{k = 2}^K b_k \\
  & \stackrel{(a)}{=} \sum_{k = 1}^K \left(\prod_{i = 1}^{k - 1} a_i\right)
  (a_k - b_k) \left(\prod_{i = k + 1}^K b_i\right)\,,
\end{align*}
where $(a)$ is by recursively applying the same argument to $\prod_{k = 2}^K a_k - \prod_{k = 2}^K b_k$. Now, we apply the absolute value and get
\begin{align*}
  \abs{\prod_{k = 1}^K a_k - \prod_{k = 1}^K b_k}
  = \abs{\sum_{k = 1}^K \left(\prod_{i = 1}^{k - 1} a_i\right)
  (a_k - b_k) \left(\prod_{i = k + 1}^K b_i\right)}
  \leq \sum_{k = 1}^K \abs{a_k - b_k}\,,
\end{align*}
since $\prod_{i = 1}^{k - 1} a_i \in [0, 1]$ and $\prod_{i = k + 1}^K b_i \in [0, 1]$.

\end{proof}

Now, we prove our main claim. For any context $X$ and list $A = (a_k)_{k \in [K]}$, we have from the definition of the CM and from the bound in \cref{lemma: products bounded by sum} that 
\begin{align*}
    \abs{V_\textsc{cm}(A, X) - \hat{V}_\textsc{cm}(A, X)} 
    &= \abs{1 - \prod_{k = 1}^K (1 - \theta_{a_k, X}) - 1 + \prod_{k = 1}^K (1 - \hat{\theta}_{a_k, X})} 
    = \abs{\prod_{k = 1}^K (1 - \hat{\theta}_{a_k, X}) - \prod_{k = 1}^K (1 - \theta_{a_k, X})} \\
    &\leq \sum_{k = 1}^K |\theta_{a_k, X} - \hat{\theta}_{a_k, X}| = \sum_{a \in A} |\theta_{a, X} - \hat{\theta}_{a, X}|
    \leq \sum_{a \in A} \sqrt{\log(\abs{\cE} \abs{\cX} / \delta) / (2 n_{a, X})}\,.
\end{align*}
The last inequality is under the assumption that \eqref{eq:joint concentration} holds, which holds with probability at least $1 - \delta$.

\paragraph{DCM}
For any context $X$ and list $A = (a_k)_{k \in [K]}$, we have from the definition of the DCM and from the bound in \cref{lemma: products bounded by sum} that 
\begin{align*}
    \abs{V_\textsc{dcm}(A, X) - \hat{V}_\textsc{dcm}(A, X)} 
    &= \abs{1 - \prod_{k = 1}^K (1 -  (1- \lambda_{k,X})\theta_{a_k, X}) - 1 + \prod_{k = 1}^K (1 - (1- \lambda_{k,X})\hat{\theta}_{a_k, X})}\\ 
    &= \abs{\prod_{k = 1}^K (1 -  (1- \lambda_{k,X})\hat{\theta}_{a_k, X}) - \prod_{k = 1}^K (1 -  (1- \lambda_{k,X})\theta_{a_k, X})} \\
    &\leq \sum_{k = 1}^K |\theta_{a_k, X} - \hat{\theta}_{a_k, X}| = \sum_{a \in A} |\theta_{a, X} - \hat{\theta}_{a, X}|
    \leq \sum_{a \in A} \sqrt{\log(\abs{\cE} \abs{\cX} / \delta) / (2 n_{a, X})}\,.
\end{align*}
The first inequality holds as $\lambda_{k,X} \in [0, 1]$, and the last inequality is under the assumption that \eqref{eq:joint concentration} holds, which holds with probability at least $1 - \delta$. This completes the proof.

\paragraph{Proof of \cref{lemma:optimization lcb}}
\begin{proof}
Let $L(A, X) = \hat{V}(A, X) - c(A, X)$. Suppose that $\abs{\hat{V}(A, X) - V(A, X)} \leq c(A, X)$ holds for all $A$ and $X$. Note that this also implies $L(A, X) \leq V(A, X)$. Then, for any $X$,
\begin{align*}
  V(A_{*, X}, X) - V(\hat{A}_X, X)
  & = V(A_{*, X}, X) - L(A_{*, X}, X) + L(A_{*, X}, X) - V(\hat{A}_X, X) \\
  & \leq V(A_{*, X}, X) - L(A_{*, X}, X) + L(\hat{A}_X, X) - V(\hat{A}_X, X) \\
  & \leq V(A_{*, X}, X) - L(A_{*, X}, X) \\
  & = V(A_{*, X}, X) - \hat{V}(A_{*, X}, X) + c(A_{*, X}, X) \\
  & \leq 2 c(A_{*, X}, X)\,.
\end{align*}
The first inequality holds because $\hat{A}_X$ maximizes $L(\cdot, X)$. The second inequality holds because $L(\hat{A}_X, X) \leq V(\hat{A}_X, X)$. The last inequality holds because $V(A_{*, X}, X) - \hat{V}(A_{*, X}, X) \leq c(A_{*, X}, X)$.

It remains to prove that $\abs{\hat{V}(A, X) - V(A, X)} \leq c(A, X)$ holds for all $A$ and $X$. By \cref{lemma: concentration list}, this occurs with probability at least $1 - \delta$, jointly over all $A$ and $X$, for $c(A, X)$ in \cref{lemma: concentration list}. This completes the proof.
\end{proof}

\section{Additional Experiments}
\label{appendix: other experiments}

\paragraph{Most Frequent Query}
In \cref{fig:01}, we show the simplest case when using only the most frequent query to observe the effect of LCBs without possible skewing due to averaging over multiple queries. Results found in the most frequent query support those already discussed in \emph{Top 10 Queries}.

\paragraph{Less Frequent Queries}
We study how less frequent queries impact the performance of LCBs. We use the same setup as in the \emph{Top 10 Queries} experiment and observe how the error changes as the number of queries increases. In \cref{fig:03}, we show comparable improvements to those of DCM in \cref{fig:02}, showing that LCBs perform well even in less frequent queries. We performed this experiment with CM and PBM as well and observed a similar behavior.

\begin{figure*}[h]
    \centering
    \begin{subfigure}[t]{0.8\linewidth}
     \includegraphics[width=\textwidth]{figs/legend.pdf}     
    \end{subfigure}
    \begin{subfigure}[t]{\linewidth}
         \includegraphics[width=\textwidth]{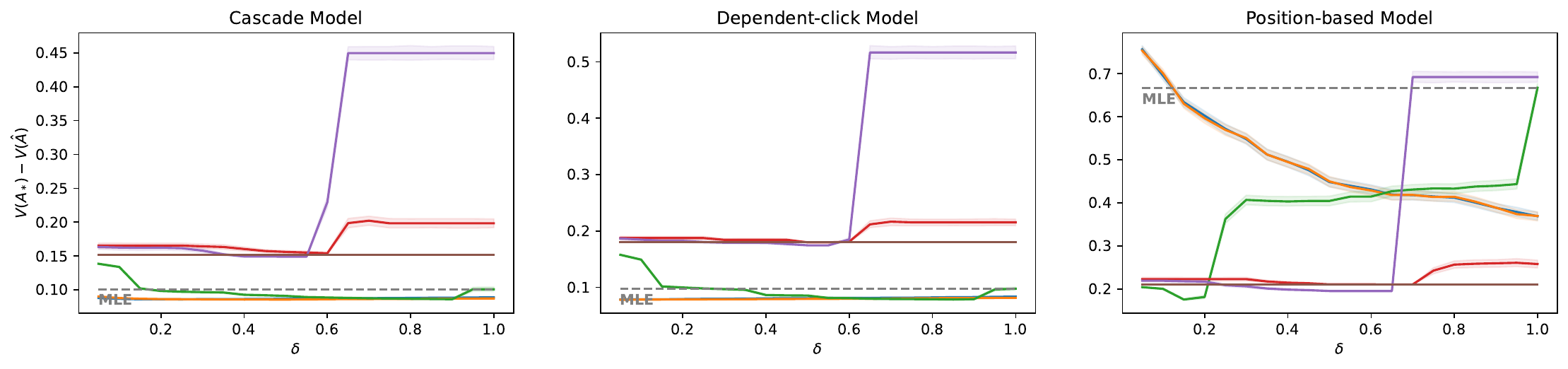}
    \end{subfigure}
    \caption{Error of the estimated lists $\hat{A}$ compared to optimal lists $A_*$ on the most frequent query.}
    \label{fig:01}
\end{figure*}

\begin{figure*}[h]
    \centering
    \begin{subfigure}[t]{0.8\linewidth}
     \includegraphics[width=\textwidth]{figs/legend.pdf}     
    \end{subfigure}
    \begin{subfigure}[t]{\linewidth}
        \includegraphics[width=\textwidth]{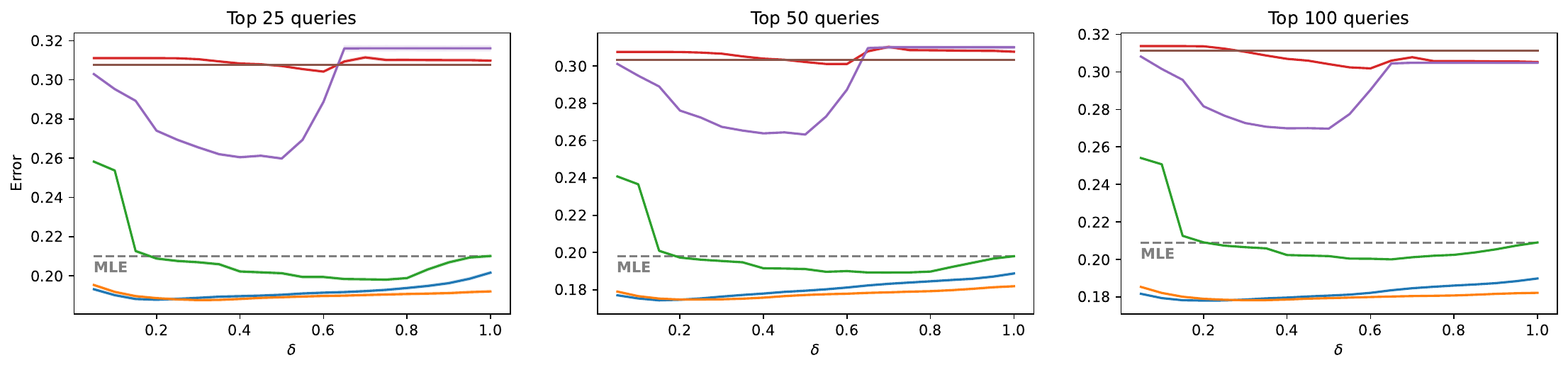}
    \end{subfigure}
    \caption{Comparison of our methods to baselines on DCM and less frequent queries.}
    \label{fig:03}
\end{figure*}

\paragraph{Cross-Validation Setting}
In all of the previous experiments, we fit a model on logged data $\cD$, use that model to generate new clicks, and then learn a new model using those clicks. One can argue that data generated under the right model yields better results, and under the wrong model, LCBs do not hold. Therefore, we split logged data $\cD$ on the 23rd day to train and test the set and use the first 23 days to train model parameters. We randomly select 1000 samples for each query from the remaining 4 days to fit models using LCB while varying $\delta$. The rest of the setting is the same as in the \emph{Top 10 Queries} experiment. The reason for this split is that using the remaining four days provides enough data in the top 10 queries to sample from. We tested other split ratios and observed that the results did not differ significantly. In \cref{fig:05}, we observe the Bayesian LCBs outperform all other baselines, and reasoning under uncertainty is still better than using MLE. Similarly to the previous experiments, we omit other linear baselines in CM and DCM plots as they perform considerably worse.

\begin{figure*}[h]
    \centering
    \begin{subfigure}[t]{0.8\linewidth}
     \includegraphics[width=\textwidth]{figs/legend.pdf}     
    \end{subfigure}
    \begin{subfigure}[t]{\linewidth}
        \includegraphics[width=\textwidth]{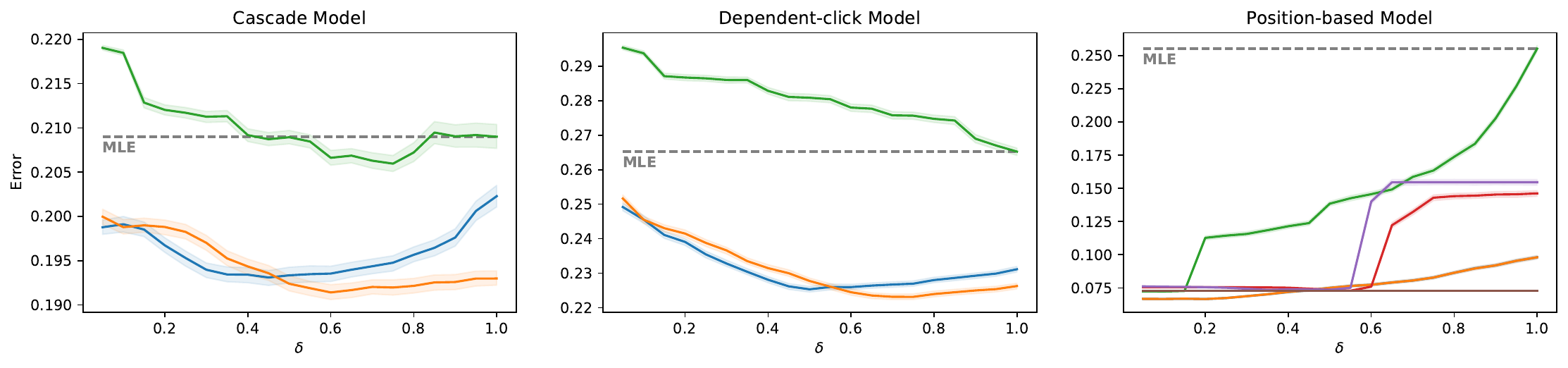}
    \end{subfigure}
    \caption{Comparison of our methods against other baselines in a setting where the fitted ground truth model does not generate clicks.} 
    \label{fig:05}
\end{figure*}

\paragraph{Finding Prior}
When we estimated prior in the previous experiments, we did so on a grid $(\alpha, \beta) \in \cG^2, \; \cG = \{2^i\}_{i=1}^{10}$ and we observed significant improvements over fixed prior $\mathrm{Beta}(1, 1)$ mostly when $\delta = 1$. In this experiment, the goal is to measure an improvement with the increasing grid size. We evaluate DCM using Bayesian LCBs and keep the rest of the setting the same as in the \emph{Top 10 Queries} experiment. We then estimate prior from these queries by searching over grid $ \cG^2, \; \cG = \{2^{i-1}\}_{i=1}^m, \; m \in [1, 2, 5, 10, 20]$. In \cref{fig:06}, we see how the results get more robust against the $\delta$ hyperparameter until $\abs{\cG} = 10$, and after that, the method still finds the same optimal prior. Therefore, in our case, it is sufficient to search over $\cG = \{2^i\}_{i=1}^{10}$. The optimal values found on $\cG$ are $ \theta \sim \mathrm{Beta}(1, 8)$ and $\lambda \sim \mathrm{Beta}(1, 64)$.

\begin{figure*}[h]
    \centering
    \includegraphics[width=0.5\textwidth]{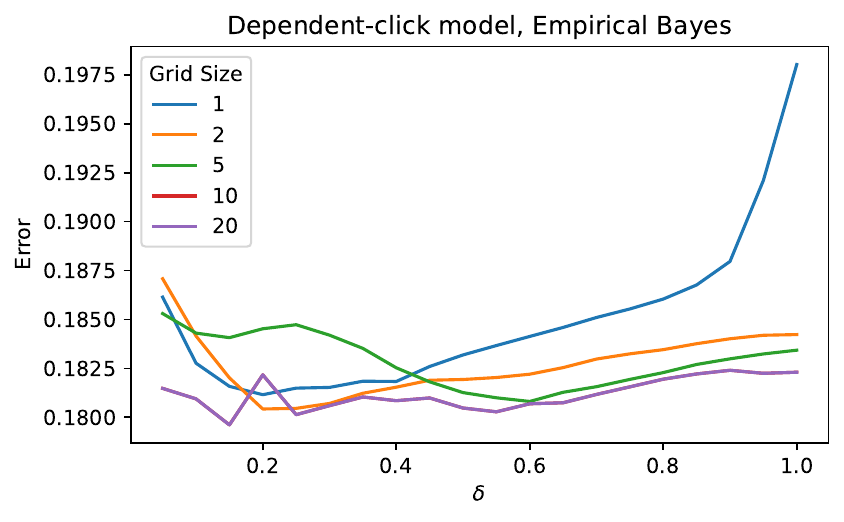}
    \caption{Empirical Bayes showing increasing performance with larger grid size. Red and purple lines are overlapping.} 
    \label{fig:06}
\end{figure*}

\end{document}